\newcommand{\eg}{{\it e.g.}}
\newcommand{\ie}{{\it i.e.}}
\newcommand{\BA}{\begin{array}}
\newcommand{\EA}{\end{array}}
\newcommand{\BIT}{\begin{itemize}}
\newcommand{\EIT}{\end{itemize}}
\newcommand{\reals}{{\mathbb{R}}} %
\newcommand{\nats}{{\mathbb{N}}} %
\newcommand{\Expect}{\mathbb{E}}
\newcommand{\argmax}{\mathop{\rm argmax}}
\newcommand{\bayesregret}{\mathcal{BR}_\phi}
\newcommand{\regret}{\mathcal{R}}
\newcommand{\Bc}{\mathcal{B}}
\newcommand{\Tc}{\mathcal{T}}
\newcommand{\Fc}{\mathcal{F}}
\newcommand{\Sc}{\mathcal{S}}
\newcommand{\Ac}{\mathcal{A}}
\newcommand{\Lc}{\mathcal{L}}
\newcommand{\cevs}{\mathcal{Q}^t}
\newcommand{\episodes}{N}
\newcommand{\epregret}{\Phi}
\newcommand{\lse}[1]{\log \sum_{#1} \exp}
\DeclarePairedDelimiter{\ceil}{\lceil}{\rceil}
\newtheorem{assumption}{Assumption}
\newtheorem{lemma}{Lemma}
\newtheorem{theorem}{Theorem}
\begin{document}

\title{Variational Bayesian Reinforcement Learning with Regret Bounds}
\author{%
Brendan O'Donoghue\\
DeepMind, UK \\
 \texttt{bodonoghue@google.com}
}

\maketitle

\begin{abstract}
In reinforcement learning the Q-values summarize the expected future rewards
that the agent will attain. However, they cannot capture the epistemic
uncertainty about those rewards. In this work we derive a new Bellman operator
with associated fixed point we call the `knowledge values'. These K-values
compress both the expected future rewards and the epistemic uncertainty into a
single value, so that high uncertainty, high reward, or both, can yield high
K-values. The key principle is to endow the agent with a risk-seeking utility
function that is carefully tuned to balance exploration and exploitation.  When
the agent follows a Boltzmann policy over the K-values it yields a Bayes regret
bound of $\tilde O(L \sqrt{S A T})$, where $L$ is the time horizon, $S$ is the
total number of states, $A$ is the number of actions, and $T$ is the number of
elapsed timesteps. We show deep connections of this approach to the soft-max
and maximum-entropy strands of research in reinforcement learning.

\end{abstract}

\section{Introduction and related work}

In reinforcement learning (RL) an agent interacts with an environment in an episodic
manner and attempts to maximize its return \cite{sutton:book,
puterman2014markov}. In this work the environment is a
Markov decision process (MDP) and we consider the Bayesian case
where the agent has some prior information and as it gathers data it updates
its posterior beliefs about the environment.  In this setting the agent is faced
with the choice of visiting well understood states or exploring the
environment to determine the value of other states which might lead to a higher
return.  This trade-off is called the \emph{exploration-exploitation} dilemma.
One way to measure how well an agent balances this trade-off is a quantity called
\emph{regret}, which measures how sub-optimal the rewards the agent has received
are so far, relative to the (unknown) optimal policy \cite{cesa2006prediction}.
In the Bayesian case the natural quantity to consider is the Bayes regret, which
is the expected regret under the agents prior information
\cite{ghavamzadeh2015bayesian}.

The optimal Bayesian policy can be formulated using \emph{belief
states}, but this is believed to be intractable for all but small problems
\cite{ghavamzadeh2015bayesian}.
Approximations to the optimal Bayesian policy exist, one of the most successful
being Thompson sampling %
\cite{strens2000bayesian, thompson1933likelihood} wherein the agent samples
from the posterior over value functions and acts greedily with respect to that sample
\cite{osband2013more, osband2014generalization, lipton2016efficient,
osband2016posterior}. It can be shown that this
strategy yields both Bayesian and frequentist regret bounds under certain
assumptions \cite{agrawal2017near}. In practice, maintaining a posterior over
value functions is intractable, and so instead the agent maintains the posterior
over MDPs, and at each episode an MDP is sampled from this posterior, the value
function for that sample is computed, and the policy acts greedily with respect to
that value function. Due to the repeated sampling and computing of value functions
this is practical only for small problems, though attempts have been made
to extend it \cite{osband2016deep, o2018uncertainty}.

Bayesian algorithms have the advantage of being able to incorporate prior
information about the problem and, as we show in the numerical experiments, they
tend to perform better than non-Bayesian approaches in practice
\cite{ghavamzadeh2015bayesian, sorg2010variance, russo2018tutorial}.  Although
typically Bayes regret bounds hold for any prior that satisfies the assumptions,
the requirement that the prior over the MDP is known in advance is a
disadvantage for Bayesian methods.  One common concern is about performance
degradation when the prior is misspecified. In this case it can be shown that
the regret increases by a multiplicative factor related to the Radon-Nikodym
derivative of the true prior with respect to the assumed prior \cite[\S
3.1]{russo2014learning}. In other words, a Bayesian algorithm with sub-linear
Bayes regret operating under a misspecified prior will still have sub-linear
regret so long as the true prior is absolutely continuous with respect to the
misspecified prior. Moreover, for any algorithm that satisfies a Bayes regret
bound it is straightforward to derive a high-probability regret bound for any
family of MDPs that has support under the prior, in a sense translating Bayes
regret into frequentist regret; see  \cite[\S 3.1]{russo2014learning},
\cite[Appendix A]{osband2013more} for details.

In this work we endow an agent with a particular \emph{epistemic risk-seeking}
utility function, where `epistemic risk' refers to the Bayesian uncertainty that
the agent has about the optimal value function of the MDP.  In the context of
RL, acting so as to maximize a risk-seeking utility function which assigns
higher values to more uncertain actions is a form of \emph{optimism in the face
of uncertainty}, a well-known heuristic to encourage exploration
\cite{jaksch2010near, auer2002finite}.  Any increasing convex function could be
used as a risk-seeking utility, however, only the exponential utility function
has a decomposition property which is required to derive a Bellman recursion
\cite{abbas2007invariant, pfanzag1959general, howard1967value, raiffa68dec}. We
call the fixed point of this Bellman operator the `K-values' for
\emph{knowledge} since they compress the expected downstream reward and the
downstream epistemic uncertainty at any state-action into a single quantity. A
high K-value captures the fact that the state-action has a high expected Q-value
or high uncertainty, or both.  Following a Boltzmann policy over the K-values
yields a practical algorithm that we call `K-learning' which attains a Bayes
regret upper bounded by $\tilde O(L \sqrt{S A T})$ \footnote{Previous versions
of this manuscript had a $L^{3/2}$ dependency. This was because $S$ was
interpreted as the number of states per-timestep, but not clearly defined that
way. This version corrects this and makes clear that $S$ now refers the
\emph{total} number of states.} where $L$ is the time horizon, $S$ is the total
number of states, $A$ is the number of actions per state, and $T$ is the number
of elapsed timesteps \cite{cesa2017boltzmann}. This regret bound matches the
best known bound for Thompson sampling up to log factors \cite{osband2013more}
and is within a factor of $\sqrt{L}$ of the known information theoretic lower
bound of $\Omega(\sqrt{LSAT})$ \cite[Appendix D]{jin2018q}.

The update rule we derive is similar to that used in `soft' Q-learning
(so-called since the `hard' max is replaced with a soft-max)
\cite{azar2012dynamic, glearning, haarnoja17, nachum2017bridging,
maginnis2017qlearn}.  These approaches are very closely related to maximum
entropy reinforcement learning techniques which add an entropy regularization
`bonus' to prevent early convergence to deterministic policies and thereby
heuristically encourage exploration \cite{williams1991function,
ziebart2010modeling, mnih2016asynchronous, o2016pgq, map2018,
levine2018rlasinf}.  In our work the soft-max operator and entropy
regularization arise naturally from the view of the agent as maximizing a
risk-seeking exponential utility.  Furthermore, in contrast to these other
approaches, the entropy regularization is not a fixed hyper-parameter but
something we explicitly control (or optimize for) in order to carefully
trade-off exploration and exploitation.

The algorithm we derive in this work is model-based, \ie, requires estimating
the full transition function for each state.  There is a parallel strand of work
deriving regret and complexity bounds for \emph{model-free} algorithms,
primarily based on extensions of Q-learning \cite{jin2018q, zhang2020almost,
li2020sample}.  We do not make a detailed comparison between the two approaches
here other than to highlight the advantage that model-free algorithms have both
in terms of storage and in computational requirements. On the other hand, in the
numerical experiments model-based approaches tend to outperform the model-free
algorithms. We conjecture that an online, model-free version of K-learning with
similar regret guarantees can be derived using tools developed by the model-free
community. We leave exploring this to future work.

\subsection{Summary of main results}
\BIT
\item We consider an agent endowed an epistemic risk-seeking utility
function and derive a new optimistic Bellman operator that incorporates the
`value' from epistemic uncertainty about the MDP. The new operator
replaces the usual max operator with a soft-max and it incorporates a `bonus'
that depends on state-action visitation.  In the limit of zero uncertainty the
new operator reduces to the standard optimal Bellman operator.
\item At each episode we solve the optimistic Bellman equation for the
`K-values' which represent the utility of a particular
state and action. If the agent follows a Boltzmann policy over the K-values with a
carefully chosen temperature schedule then it will enjoy a sub-linear Bayes
regret bound.
\item  To the best of our knowledge this is the first work to show that soft-max
operators and maximum entropy policies in RL can provably yield good performance
as measured by Bayes regret.  Similarly, we believe this is the first result
deriving a Bayes regret bound for a Boltzmann policy in RL.  This puts
maximum entropy, soft-max operators, and Boltzmann exploration in a principled
Bayesian context and shows that they are naturally derived from endowing the
agent with an exponential utility function.
\EIT

\section{Markov decision processes}

In a Markov decision process (MDP) an agent interacts with an environment in a
series of episodes and attempts to maximize the cumulative reward.  We model
the environment as a finite state-action, time-inhomogeneous MDP given by the
tuple $\mathcal{M} = \{\Sc, \Ac, R, P, L, \rho\}$, where $\Sc$ is the
state-space, $\Ac$ is the action-space, $R_l(s, a)$ is a probability
distribution over the rewards received by the agent at state $s$ taking action
$a$ at timestep $l$, $P_l(s^\prime\mid s, a) \in [0,1]$ is the probability the
agent will transition to state $s^\prime$ after taking action $a$ in state $s$
at timestep $l$, $L \in \nats$ is the episode length, and $\rho$ is the initial
state distribution.  We assume that the state space can be decomposed layerwise
as $\Sc = \Sc_1 \cup \Sc_2 \cup \ldots \cup \Sc_L$ and it has cardinality
$|\Sc|=\sum_{l=1}^L |\Sc_l| = S$, and the cardinality of the action space is
$|\Ac| = A$.  Concretely, the initial state $s_1 \in \Sc_1$ of the agent is
sampled from $\rho$, then for timesteps $l = 1, \ldots, L$ the agent is in
state $s_l \in \Sc_l$, selects action $a_l \in \Ac$, receives reward $r_l \sim
R_l(s_l, a_l)$ with mean $\mu_l(s_l,a_l) \in \reals$ and transitions to the
next state $s_{l+1} \in \Sc_{l+1}$ with probability $P_l(s_{l+1} \mid s_l,
a_l)$. After timestep $L$ the episode terminates and the state is reset.  We
assume that at the beginning of learning the agent does not know the reward or
transition probabilities and must learn about them by interacting with the
environment. We consider the Bayesian case in which the mean reward $\mu$ and
the transition probabilities $P$ are sampled from a known prior $\phi$. We
assume that the agent knows $S$, $A$, $L$, and the reward noise distribution.

An agent following policy $\pi_l : \Sc_l \times \Ac \rightarrow [0,1]$
at state $s \in \Sc_l$ at time $l$ selects action $a$ with probability
$\pi_l(s, a)$.
The Bellman equation relates the value
of actions taken at the current timestep to future returns through the
\emph{Q-values} and the associated \emph{value function} \cite{bellman}, which for
policy $\pi$ are denoted $Q_l^\pi \in \reals^{|\Sc_l| \times A}$ and $V_l^\pi \in
\reals^{|\Sc_l|}$ for $l=1, \ldots,L+1$, and satisfy
\begin{equation}
\label{e-bell-pol}
  Q^\pi_l = \Tc_l^\pi Q^\pi_{l+1}, \quad V_l^\pi(s) = \sum_{a \in \Ac} \pi_l(s, a)
Q_l^\pi(s, a),
\end{equation}
for $l=1, \ldots, L$ where $Q_{L+1} \equiv 0$ and where the Bellman operator for
policy $\pi$ at step $l$ is defined as
\[
  (\Tc_l^\pi Q^\pi_{l+1})(s, a) \coloneqq \mu_l(s, a) + \sum_{s^\prime \in \Sc_{l+1}}P_l(s^\prime \mid s,
a) \sum_{a^\prime \in \Ac} \pi_l(s^\prime, a^\prime) Q_{l+1}^\pi(s^\prime, a^\prime).
\]
The expected performance of policy $\pi$ is denoted $J^\pi = \Expect_{s \sim \rho}
V_1^\pi(s)$.
An optimal policy satisfies $\pi^\star
\in \argmax_\pi J^\pi$ and induces associated optimal
Q-values and value function given by
\begin{equation}
\label{e-bellman_q}
  Q^\star_l = \Tc_l^\star Q^\star_{l+1}, \quad V_l^\star(s) = \max_{a} Q_l^\star(s, a).
\end{equation}
for $l=1, \ldots, L$, where $Q^\star_{L+1} \equiv 0$ and
where the optimal Bellman operator is defined at step $l$ as
\begin{equation}
\label{e-bellman_opt_op}
  (\Tc_l^\star Q^\star_{l+1})(s, a) \coloneqq \mu_l(s, a) + \sum_{s^\prime \in \Sc_{l+1}}P_l(s^\prime
\mid s, a) \max_{a^\prime} Q_{l+1}^\star(s^\prime, a^\prime).
\end{equation}

\subsection{Regret}
If the mean reward $\mu$ and transition function $P$ are known exactly then (in
principle) we
could solve \eqref{e-bellman_q} via dynamic programming
\cite{bertsekas2005dynamic}.  However, in practice these are not known and so
the agent must gather data by interacting with the environment over a series of
episodes. The key trade-off is the \emph{exploration-exploitation} dilemma,
whereby an agent must take possibly suboptimal actions in order to learn about
the MDP.  Here we are interested in the \emph{regret}
up to time $T$, which is how sub-optimal the agent's policy has
been so far. The regret for an algorithm producing policies $\pi^t$,
$t=1, \ldots, N$ executing on MDP $\mathcal{M}$ is defined as
\[
\regret_{\mathcal{M}}(T) \coloneqq \sum_{t=1}^{\episodes}
\Expect_{s \sim \rho} (V_1^\star(s) - V_1^{\pi^t}(s)),
\]
where $\episodes \coloneqq \ceil{T/L}$ is the number of elapsed episodes. %
In this manuscript we take the case where $\mathcal{M}$
is sampled from a known prior $\phi$ and we want to minimize the expected regret of
our algorithm under that prior distribution. This is referred to as the
\emph{Bayes regret}:
\begin{equation}
\label{e-bayesregret1}
\bayesregret(T) \coloneqq \Expect_{\mathcal{M} \sim \phi} \regret_{\mathcal{M}}(T) = \Expect_{\mathcal{M}\sim \phi} \sum_{t=1}^{\episodes}
\Expect_{s \sim \rho} (V_1^\star(s) - V_1^{\pi^t}(s)).
\end{equation}
In the Bayesian view of the RL problem the quantities $\mu$ and $P$ are random
variables, and consequently the optimal Q-values $Q^\star$, policy $\pi^\star$,
and value function $V^\star$ are also random variables that must be learned about
by gathering data from the environment. We shall denote by $\Fc_t$
the sigma-algebra generated by all the history \emph{before} episode $t$ where
$\Fc_1 = \emptyset$ and we shall use $\Expect^t$ to denote $\Expect(~\cdot \mid
\Fc_t)$, the expectation conditioned on $\Fc_t$. For example, with this notation
$\Expect^t Q^\star$ denotes the expected optimal Q-values under the posterior
before the start of episode $t$.

\section{K-learning}
Now we present Knowledge Learning (K-learning), a Bayesian RL algorithm that
satisfies a sub-linear Bayes regret guarantee. In standard dynamic programming
the Q-values are the unique fixed point of the Bellman equation, and they
summarize the expected future reward when following a particular policy.
However, standard Q-learning is not able to incorporate any of the uncertainty
about future rewards or transitions. In this work we develop a new Bellman
operator with associated fixed point we call the `K-values' which represent
\emph{both} the expected future rewards and the uncertainty about those rewards.
These two quantities are compressed into a single value by the use of an
exponential risk-seeking utility function, which is tuned to trade-off
exploration and exploitation.  In this section we develop the intuition behind
the approach and defer all proofs to the appendix.  We begin with the main
assumption that we require for the analysis (this assumption is standard, see,
\eg, \cite{osband2016posterior}).

\begin{assumption}
\label{ass1}
The mean rewards are bounded in $[0,1]$ almost surely with independent
priors, the reward noise is additive $\sigma$-sub-Gaussian, and the prior over
transition functions is independent Dirichlet.
\end{assumption}

\subsection{Utility functions and the certainty equivalent value}
A utility function $u: \reals \rightarrow \reals$ measures an agents preferences
over outcomes \cite{von2007theory}. If $u(x) > u(y)$ for some $x, y \in \reals$
then the agent prefers $x$ to $y$, since it derives more utility from $x$ than
from $y$.
If $u$ is convex then it is referred to as \emph{risk-seeking}, since $\Expect
u(X) \geq u(\Expect X)$ for random variable $X$ due to Jensen's inequality.
The particular utility function we shall use is the exponential
utility $u(x) \coloneqq \tau(\exp(x / \tau) - 1)$ for some $\tau \geq 0$.  The
\emph{certainty equivalent value} of a random variable under utility $u$
measures how much guaranteed payoff is equivalent to a random payoff, and for
$Q_l^\star(s,a)$ under the exponential utility is given by
\begin{equation}
\label{e-certequiv}
\cevs_l(s,a) \coloneqq u^{-1}(\Expect^t u(Q_l^\star(s,a)) = \tau \log \Expect^t \exp (Q_l^\star(s,a)/\tau).
\end{equation}
This is the key quantity we use to summarize the expected value and the epistemic
uncertainty into a single value.  As an example, consider a stochastic
multi-armed bandit (\ie, an MDP with $L=1$ and $S=1$) where the prior over the
rewards and the reward noise are independent Gaussian distributions. At round
$t$ the posterior over $Q^\star(a)$ is given by $\mathcal{N}(\mu^t_a,
(\sigma^t_a)^2)$ for some $\mu^t_a$ and $\sigma^t_a$ for each action $a$,
due to the conjugacy of the prior and the likelihood. In this case the certainty
equivalent value can be calculated using the Gaussian cumulant generating
function, and is given by $\cevs(a) = \mu^t_a + (1/2) (\sigma^t_a)^2 / \tau_t$.
Evidently, this value is combining the expected reward and the
epistemic uncertainty into a single quantity with $\tau_t$ controlling the
trade-off, and the value is higher for arms with more epistemic
uncertainty.  Now consider the policy $\pi^t(a) \propto \exp(\cevs(a) /
\tau_t)$.  This policy will in general assign greater probability to more
uncertain actions, \ie, the policy is \emph{optimistic}.  We shall
show later that for a carefully selected sequence of temperatures $\tau_t$ we
can ensure that this policy enjoys a $\tilde O(\sqrt{AT})$ Bayes regret bound
for this bandit case. In the more general RL case the posterior over the
Q-values is a complicated function of downstream uncertainties and is not a
simple distribution like a Gaussian, but the intuition is the same.

The choice of the exponential utility may seem arbitrary, but in
fact it is the unique utility function that has the property that the
certainty equivalent value of the sum of two independent random variables is
equal to the sum of their certainty equivalent values \cite{abbas2007invariant,
pfanzag1959general, howard1967value, raiffa68dec}. This property is crucial for
deriving a Bellman recursion, which is necessary for dynamic programming to be
applicable.

\subsection{Optimistic Bellman operator}

A risk-seeking agent would compute the certainty equivalent
value of the Q-values under the endowed utility function and then act to
maximize this value. However, computing the certainty equivalent values in a
full MDP is challenging. The main result (proved in the appendix) is that $\cevs$
satisfies a Bellman \emph{inequality} with a particular optimistic (\ie,
risk-seeking) Bellman operator, which for episode $t$ and timestep $l$ is given
by
\begin{equation}
\label{e-bellmanop}
  \Bc_l^t(\tau, y)(s, a) = \Expect^t \mu_l(s, a)  + \frac{\sigma^2 + (L-l)^2}{2 \tau (n_l^t(s, a) \vee 1)} + \sum_{s^\prime \in \Sc_{l+1}} \Expect^t P_l(s^\prime \mid s, a) (\tau \lse{a^\prime \in \Ac}(y(s^\prime, a^\prime)/\tau))
\end{equation}
for inputs $\tau \geq 0$, $y \in \reals^{|\Sc_l| \times A}$ where
$n_l^t(s,a)$ is the visitation count of the agent to state-action $(s,a)$
at timestep $l$ before episode $t$ and $(\cdot \vee 1) \coloneqq \max(\cdot, 1)$.
Concretely we have that for any $(s,a) \in \Sc\times\Ac$
\[
\cevs_l(s,a) \leq \Bc_l^t(\tau, \cevs_{l+1})(s, a), \quad l=1, \ldots, L.
\]
From this fact we show that the fixed point of the optimistic Bellman
operator yields a guaranteed upper bound on $\cevs$, \ie,
\begin{equation}
\label{e-upperb}
\Big(K^t_l =  \Bc_l^t(\tau, K^t_{l+1}),\ l=1, \ldots, L\Big) \Rightarrow \Big( K^t_l
\geq \cevs_l,\ l=1,\ldots,L \Big).
\end{equation}
We refer to the fixed point as the `K-values' (for knowledge) and we shall show
that they are a sufficiently faithful approximation of $\cevs$ to provide a
Bayes regret guarantee when used instead of the certainty equivalent
values in a policy.

Let us compare the optimistic Bellman operator $\Bc^t$ to the optimal
Bellman operator $\Tc^\star$ defined in \eqref{e-bellman_opt_op}. The first
difference is that the random variables $\mu$ and $P$ are replaced with their
expectation under the posterior; in $\Tc^\star$ they are assumed to be known.
Secondly, the rewards in the optimistic Bellman operator have been augmented
with a bonus that depends on the visitation counts $n^t$. This bonus encourages
the agent to visit state-actions that have been visited less frequently.
Finally, the hard-max of the optimal Bellman operator has been replaced with
a soft-max.  Note that in the limit of zero uncertainty in the MDP (take
$n_l^t(s,a) \rightarrow \infty$ for all $(s,a)$) we have $\Bc_l^t(0, \cdot) =
\Tc_l^\star$ and we recover the optimal Bellman operator, and consequently
in that case $K^t_l(s,a) = \cevs_l(s,a) = Q_l^\star(s,a)$.  In other words, the
optimistic Bellman operator and associated K-values generalize the optimal
Bellman operator and optimal Q-values to the epistemically uncertain case, and
in the limit of zero uncertainty we recover the optimal quantities.

\subsection{Maximum entropy policy}
An agent that acts to maximize its K-values is (approximately)
acting to maximize its risk-seeking utility. In the appendix we show that
the policy that maximizes the expected K-values with \emph{entropy
regularization} is the natural policy to use, which is motivated
by the variational description of the soft-max
\begin{equation}
\label{e-variational}
\max_{\pi_l(s, \cdot) \in \Delta_A} \left(\sum_{a \in \Ac} \pi_l(s, a) K_l^t(s, a) + \tau_t
H(\pi_l(s, \cdot))\right) = \tau_t \lse{a \in \Ac} (K_l^t(s, a)/ \tau_t)
\end{equation}
where $\Delta_A$ is the probability simplex of dimension $A-1$
and $H$ is entropy, \ie, $H(\pi_l(s, \cdot)) = -\sum_{a \in \Ac} \pi_l(s, a) \log\pi_l(s,
a)$ \cite{cover2012elements}. The maximum is achieved by the Boltzmann (or
Gibbs) distribution with temperature $\tau_t$
\begin{equation}
\label{e-policy}
\pi_l^t(s, a) \propto \exp (K_l^t(s, a) / \tau_t).
\end{equation}
This variational principle also arises in statistical mechanics where
Eq.~\eqref{e-variational} refers to the negative Helmholtz free energy and the
distribution in Eq.~\eqref{e-policy} describes the probability that the system
at temperature $\tau_t$ is in a particular `state' \cite{jaynes1957information}.

\subsection{Choosing the risk-seeking parameter / temperature}
The optimistic Bellman operator, K-values, and associated policy depend on the
parameter $\tau_t$. By carefully controlling this parameter we ensure
that the agent balances exploration and exploitation.
We present two ways to do so, the first of which is to follow the
schedule
\begin{equation}
\label{e-betat}
\tau_t = \sqrt{\frac{(\sigma^2 + L^2)SA(1+\log t)}{4 L t \log A}}.
\end{equation}
Alternatively, we find the $\tau_t$ that
yields the tightest bound in the maximal inequality in
\eqref{e-smax-upper}. This turns out to be a convex optimization problem
\begin{align}
\begin{split}
\label{e-cvx-opt-prob}
\mbox{minimize} & \quad \Expect_{s \sim \rho} \left(\tau \lse{a \in \Ac} (K_1(s, a) /
\tau)\right) \\
\mbox{subject to} & \quad K_l \geq \Bc^t_l(\tau, K_{l+1}),\quad l = 1, \ldots, L, \\
& \quad K_{L+1} \equiv 0,
\end{split}
\end{align}
with variables $\tau \geq 0$ and $K \in \reals^{S \times A}$.  This is convex
jointly in $\tau$ and $K$ since the Bellman operator is convex in both
arguments and the perspective of the soft-max term in the
objective is convex \cite{boyd2004convex}. This generalizes the linear
programming formulation of dynamic programming to the case where we have
uncertainty over the parameters that define the MDP \cite{puterman2014markov,
bertsekas2005dynamic}.  Problem \eqref{e-cvx-opt-prob} is an \emph{exponential
cone program} and can be solved efficiently using modern methods \cite{ocpb:16,
scs, ecos, serrano2015algorithms, o2021operator}.

Both of these schemes for choosing $\tau_t$ yield a Bayes regret bound, though
in practice the $\tau_t$ obtained by solving \eqref{e-cvx-opt-prob} tends to
perform better.  Note that since actions are sampled from the stochastic policy
in Eq.~\eqref{e-policy} we refer to K-learning a \emph{randomized}
strategy.  If K-learning is run with the optimal choice of temperature
$\tau^\star_t$ then it is additionally a \emph{stationary} strategy in that the
action distribution depends solely on the posteriors and is otherwise
independent of the time period \cite{russo2018tutorial}.

\subsection{Regret analysis}
\begin{algorithm}[t]
\caption{K-learning for episodic MDPs}
\begin{algorithmic}
\setstretch{1.1}
\State {\bf Input:} MDP $\mathcal{M} = \{\Sc, \Ac, R, P, L, \rho\}$,
\For{episode $t=1,2,\ldots$}
\State calculate $\tau_t$ using \eqref{e-betat} or \eqref{e-cvx-opt-prob}
\State set $K^t_{L+1} \equiv 0$
\State compute $K^t_l = \Bc^t_l(\tau_t, K^t_{l+1})$, for $l=L,\ldots,1$, using \eqref{e-bellmanop}
\State execute policy $\pi_l^t(s, a) \propto \exp(K_l^t(s, a)/ \tau_t)$, for $l=1,\ldots,L$
\EndFor
\end{algorithmic}
\label{a-klearning}
\end{algorithm}

\newcommand{\brthmtext}{
Under assumption \ref{ass1} the K-learning algorithm~\ref{a-klearning} satisfies
Bayes regret bound
\begin{align}
\begin{split}
\bayesregret(T) &\leq 2 \sqrt{(\sigma^2 + L^2) S A T \log A (1 + \log T/ L)}\\
&= \tilde O(L \sqrt{S A T}).
\end{split}
\end{align}
}
\newtheorem*{T1}{Theorem~\ref{t-kl_mdp}}
\begin{theorem}
\label{t-kl_mdp}
\brthmtext
\end{theorem}
The full proof is included in the appendix. The main challenge is showing that
the certainty equivalent values $\cevs$ satisfy the Bellman inequality with the
optimistic Bellman operator \eqref{e-bellmanop}. This is used to show that the
K-values upper bound $\cevs$ (Eq.~\eqref{e-upperb}) from which we derive
the following maximal inequality
\begin{align}
\begin{split}
\label{e-smax-upper}
\Expect^t \max_a Q^\star_l(s,a) \leq \tau_t \lse{a \in \Ac} (\cevs_l(s,a) / \tau_t) \leq
\tau_t \lse{a \in \Ac} (K^t_l(s,a) /\tau_t).
\end{split}
\end{align}
From this and the Bellman recursions that the K-values and the Q-values must
satisfy, we can `unroll' the Bayes regret \eqref{e-bayesregret1} over the MDP.
Using the variational description of the soft-max in Eq.~\eqref{e-variational}
we can cancel out the expected reward terms leaving us with a sum over
`uncertainty' terms. Since the uncertainty is reduced by the
agent visiting uncertain states we can bound the remaining terms using a standard
pigeonhole argument. Finally, the temperature $\tau_t$ is a free-parameter for
each episode $t$, so we can choose it so as to minimize the upper bound. This
yields the final result.

The Bayes regret bound in the above theorem matches the best known bound for
Thompson sampling up to log factors \cite{osband2013more}.  Moreover, the above
regret bound is within a factor of $\sqrt{L}$ of the known information
theoretic lower bound \cite[Appendix D]{jin2018q}.

Intuitively speaking, K-values are higher where the agent has high epistemic
uncertainty. Higher K-values will make the agent more likely to take the actions
that lead to those states.  Over time states with high uncertainty will be
visited and the uncertainty about them will be resolved. The temperature
parameter $\tau_t$ is controlling the balance between exploration
and exploitation.

\section{Numerical experiments}
In this section we compare the performance of both the temperature scheduled and
optimized temperature variants of K-learning against several other methods in
the literature. %
We consider a small tabular MDP called \emph{DeepSea} \cite{osband2017deep}
shown in Figure~\ref{fig:deepsea-cartoon}, which can be thought of as an unrolled
version of the RiverSwim environment \cite{strehl2008analysis}.  This MDP can be
visualized as an $L \times L$ grid where the agent starts at the top row and
leftmost column. At each time-period the agent can move left or right and
descends one row. The only positive reward is at the bottom right cell.  In
order to reach this cell the agent must take the `right' action every timestep.
After choosing action `left' the agent receives a random reward with zero mean, and after
choosing right the agent receives a random reward with small negative mean. At the
bottom rightmost corner of the grid the agent receives a random reward with mean one.
Although this is a toy example it provides a challenging `needle in a
haystack' exploration problem. Any algorithm that does exploration via a simple
heuristic like local dithering will take time exponential in the depth $L$ to
reach the goal.
Policies that perform deep exploration can learn much faster
\cite{osband2013more, o2018uncertainty}.

\begin{figure}
  \centering
  \includegraphics[width=0.7\linewidth]{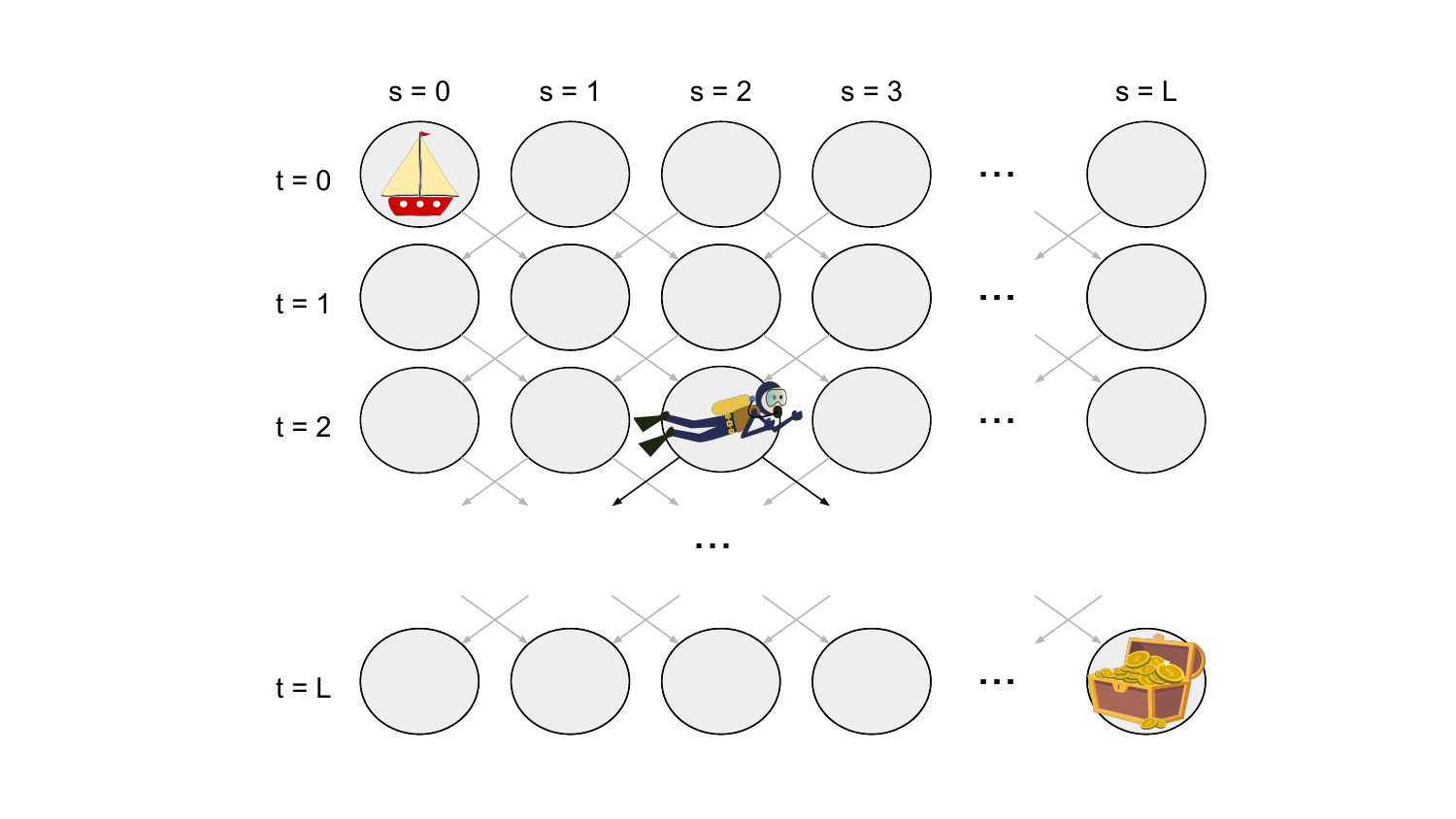}
  \vspace{5mm}
  \captionof{figure}{The DeepSea MDP}
  \label{fig:deepsea-cartoon}
\end{figure}%

In Figure~\ref{fig:deepsea-regret} we show the time required to `solve' the
problem as a function of the depth of the environment, averaged over $5$ seeds
for each algorithm. We define `time to solve' to be the first episode at which
the agent has reached the rewarding state in at least $10\%$ of the episodes so
far. If an agent fails to solve an instance within $10^5$ episodes we do not
plot that point, which is why some of the traces appear to abruptly stop. We
compare two dithering approaches, Q-learning with epsilon-greedy ($\epsilon =
0.1$) and soft-Q-learning \cite{haarnoja17} ($\tau = 0.05$), against principled
exploration strategies RLSVI \cite{osband2017deep}, UCBVI
\cite{azar2017minimax}, optimistic Q-learning (OQL) \cite{jin2018q}, BEB
\cite{kolter2009near}, Thompson sampling \cite{osband2013more} and two variants
of K-learning, one using the $\tau_t$ schedule \eqref{e-betat} and the other
using the optimal choice $\tau_t^\star$ from solving \eqref{e-cvx-opt-prob}.
Soft Q-learning is similar to K-learning with two major differences: the
temperature term is a fixed hyperparameter and there is no optimism bonus added
to the rewards. These differences prevent soft Q-learning from satisfying a
regret bound and typically it cannot solve difficult exploration tasks in
general \cite{o2020making}. We also ran comparisons against BOLT
\cite{araya2012near}, UCFH \cite{dann2015sample}, and UCRL2
\cite{jaksch2010near} but they did not perform much better than the dithering
approaches and contributed to the clutter in the figure so we do not show them.

As expected, the two `dithering' approaches are unable to handle the problem as
the depth exceeds a small value; they fail to solve the problem within $10^5$
episodes for problems larger than $L=6$ for epsilon-greedy and $L=14$ for
soft-Q-learning. These approaches are taking time exponential in the size of the
problem to solve the problem, which is seen by comparing their performance to
the grey dashed line which plots $2^{L-1}$. The other approaches scale more
gracefully, however clearly Thompson sampling and K-learning are the most
efficient. The optimal choice of K-learning appears to perform slightly better
than the scheduled temperature variant, which is unsurprising since it is
derived from a tighter upper bound on the regret.

In Figures \ref{fig-deepsea-kvals} and \ref{fig-deepsea-softqvals} we show the
progress of K-learning using $\tau_t^\star$ and Soft Q-learning for a single
seed running on the $L=50$ depth DeepSea environment.   In the top row of each
figure we show the value of each state over time, defined for K-learning as
\begin{equation}
\label{e-soft-value}
\tilde V_l^t(s) = \tau_t \lse{a \in \Ac} (K_l^t(s,a) / \tau_t),
\end{equation}
and analogously for Soft Q-learning.  The
bottom row shows the log of the visitation counts over time. Although both
approaches start similarly, they quickly diverge in their behavior. If we examine
the K-learning plots, it is clear that the agent is visiting more of the bottom
row as the episodes proceed. This is driven by the fact that the value,
which incorporates the epistemic uncertainty, is high for the unvisited states.
Concretely, take the $t=300$ case; at this point the K-learning agent has not
yet visited the rewarding bottom right state, but the value is very high for
that region of the state space and shortly after this it reaches the reward for
the first time. By the $t=1000$ plot the agent is travelling along the diagonal
to gather the reward consistently.
Contrast this to Soft Q-learning, where the agent does not make it even halfway
across the grid after $t=1000$ episodes. This is because the soft Q-values
do not capture any uncertainty about the environment, so the agent has no
incentive to explore and visit new states. The only exploration that soft
Q-learning is performing is merely the local dithering arising from using a
Boltzmann policy with a nonzero temperature. Indeed, the soft value function
barely changes in this case since the agent is consistently gathering zero-mean
rewards; any fluctuation in the value function arises merely from the
noise in the random rewards.

\begin{figure}
  \centering
  \includegraphics[width=1\linewidth]{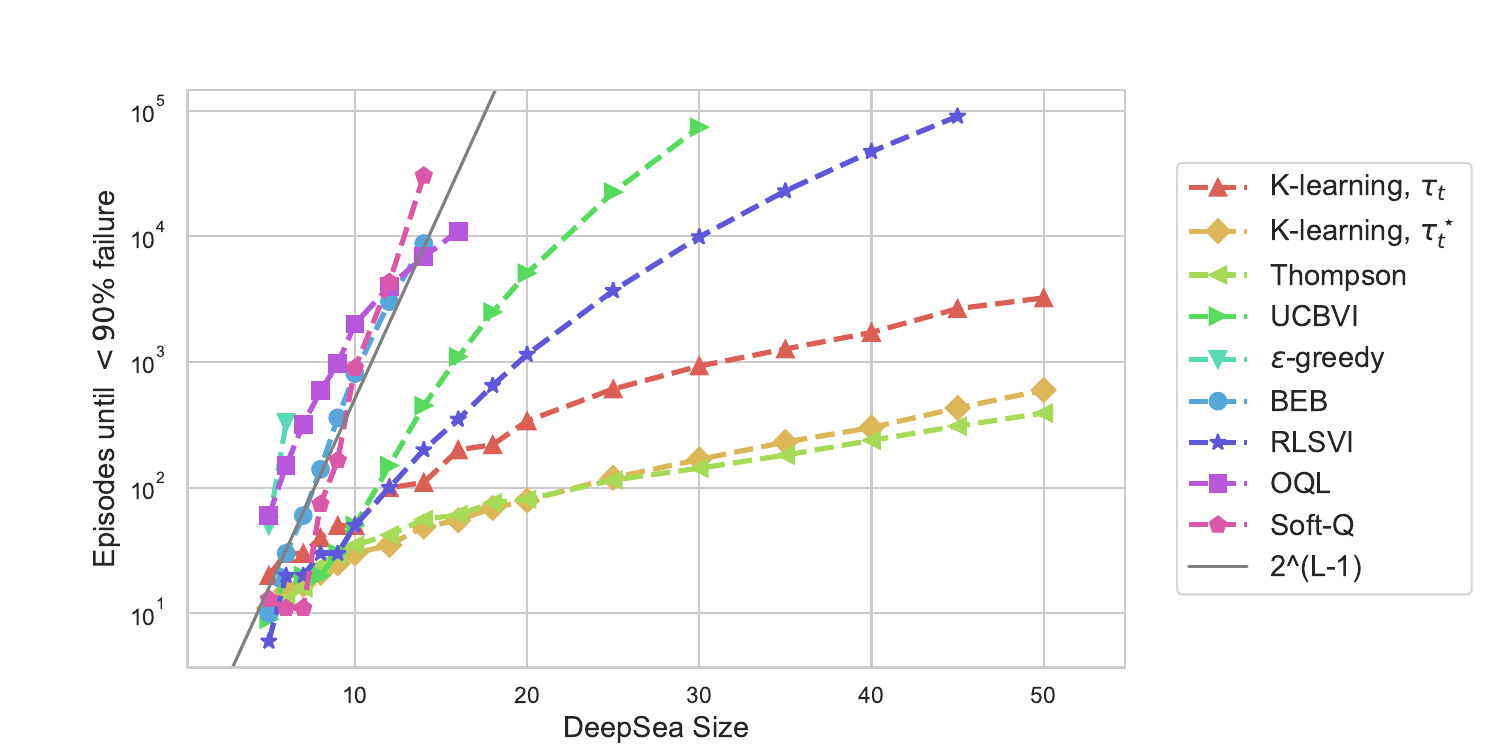}
  \captionof{figure}{Learning time on DeepSea.}
  \label{fig:deepsea-regret}
\end{figure}%

\begin{figure}[h]
\centering
\begin{subfigure}[ht]{0.194\textwidth}
\includegraphics[width=\linewidth]{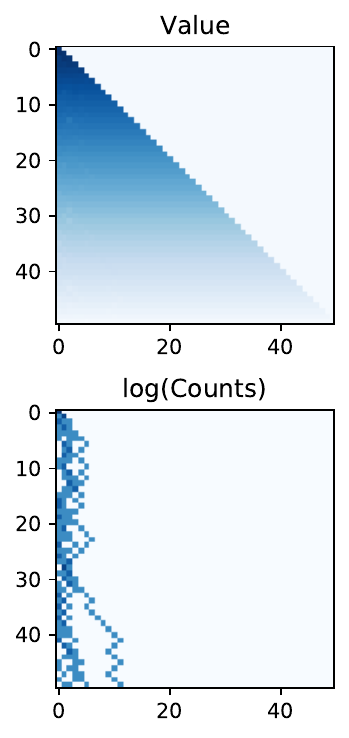}
\caption{$t=5$.}
\end{subfigure}
\begin{subfigure}[ht]{0.194\textwidth}
\includegraphics[width=\linewidth]{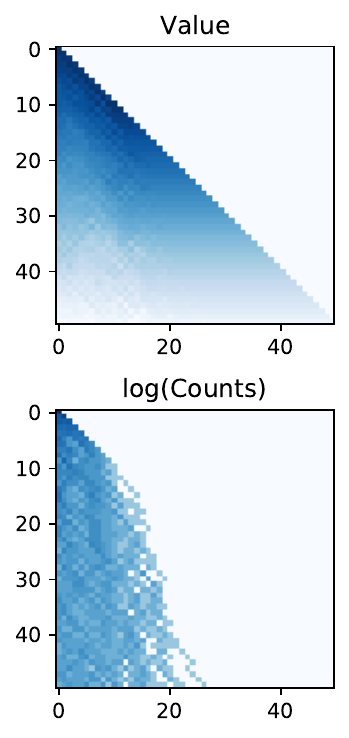}
\caption{$t=50$.}
\end{subfigure}
\begin{subfigure}[ht]{0.194\textwidth}
\includegraphics[width=\linewidth]{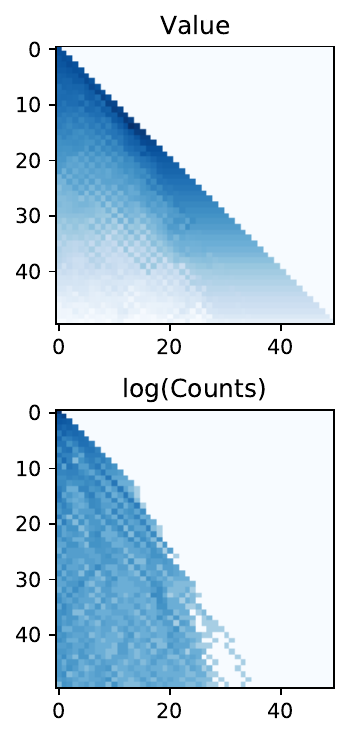}
\caption{$t=100$.}
\end{subfigure}
\begin{subfigure}[ht]{0.194\textwidth}
\includegraphics[width=\linewidth]{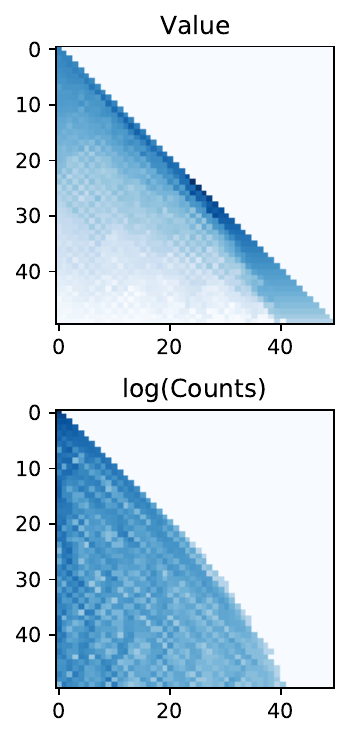}
\caption{$t=300$.}
\end{subfigure}
\begin{subfigure}[ht]{0.194\textwidth}
\includegraphics[width=\linewidth]{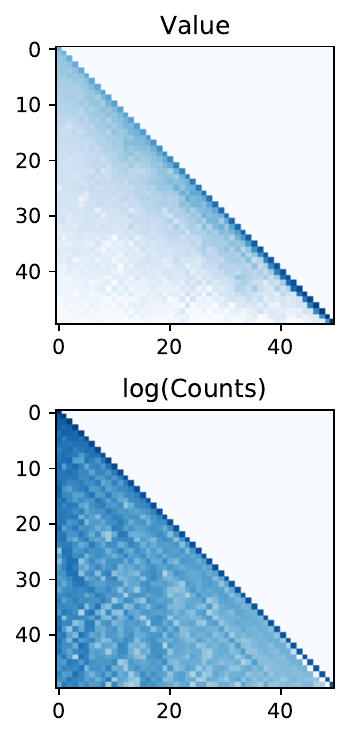}
\caption{$t=1000$.}
\end{subfigure}
\caption{Value (as defined in \eqref{e-soft-value}) and log visitation count for
each state in DeepSea under K-learning using $\tau_t^\star$.
Darker color indicates larger value.}
\label{fig-deepsea-kvals}
\end{figure}

\begin{figure}[h]
\centering
\begin{subfigure}[ht]{0.194\textwidth}
\includegraphics[width=\linewidth]{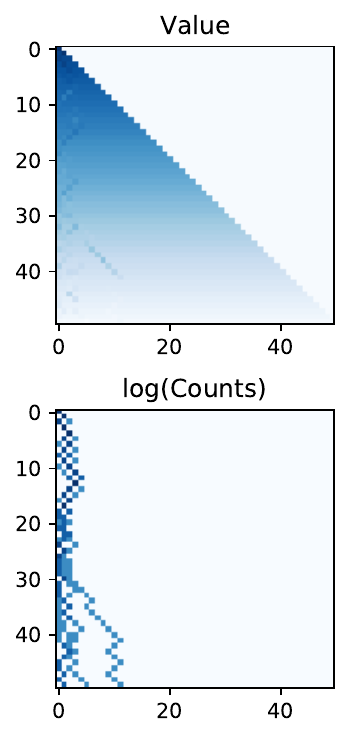}
\caption{$t=5$.}
\end{subfigure}
\begin{subfigure}[ht]{0.194\textwidth}
\includegraphics[width=\linewidth]{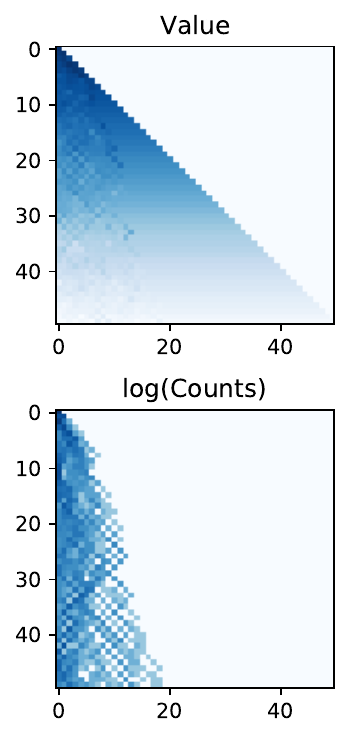}
\caption{$t=50$.}
\end{subfigure}
\begin{subfigure}[ht]{0.194\textwidth}
\includegraphics[width=\linewidth]{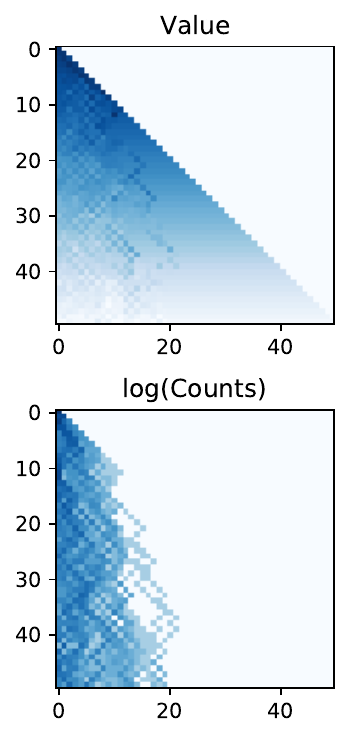}
\caption{$t=100$.}
\end{subfigure}
\begin{subfigure}[ht]{0.194\textwidth}
\includegraphics[width=\linewidth]{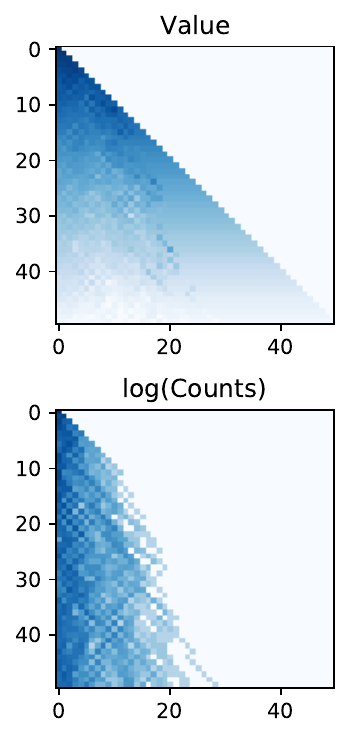}
\caption{$t=300$.}
\end{subfigure}
\begin{subfigure}[ht]{0.194\textwidth}
\includegraphics[width=\linewidth]{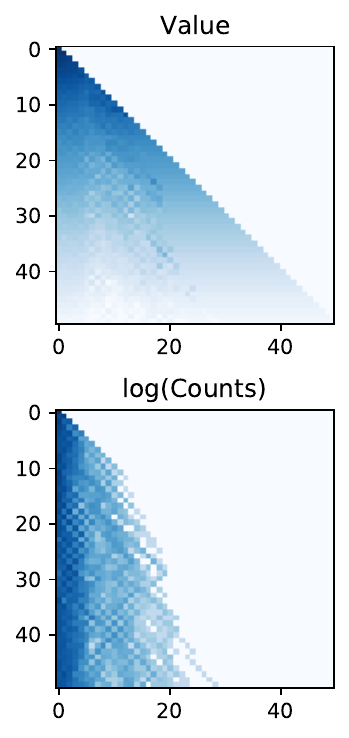}
\caption{$t=1000$.}
\end{subfigure}
\caption{Value  (as defined in \eqref{e-soft-value}) and log visitation count
for each state in DeepSea under soft Q-learning. Darker color indicates larger
value.}
\label{fig-deepsea-softqvals}
\end{figure}

\section{Conclusions}
In this work we endowed a reinforcement learning agent with a
\emph{risk-seeking} utility, which encourages the agent to take actions that
lead to less epistemically certain states. This yields a Bayesian algorithm with
a bound on regret which matches the best-known regret bound for Thompson
sampling up to log factors and is close to the known lower bound.  We call the
algorithm `K-learning', since the `K-values' capture something about the
epistemic \emph{knowledge} that the agent can obtain by visiting each
state-action. In the limit of zero uncertainty the K-values reduce to the
optimal Q-values.

Although K-learning and Thompson sampling have similar theoretical and empirical
performance, K-learning has some advantages. For one, it was recently shown that
K-learning extends naturally to the case of two-player games and continues to
enjoy a sub-linear regret bound, whereas Thompson sampling can suffer linear
regret \cite{klearninggames}. Secondly, Thompson sampling requires sampling from
the posterior over MDPs and solving the sampled MDP exactly at each episode.
This means that Thompson sampling does not have a `fixed' policy at any given
episode since it is determined by the posterior information plus a sampling
procedure. This is typically a deterministic policy, and it can vary
significantly from episode to episode.  By contrast K-learning has a single
fixed policy at each episode, the Boltzmann distribution over the K-values, and
it is determined entirely from the posterior information (\ie, no sampling).
Moreover, K-learning requires solving a Bellman equation that changes slowly as
more data is accumulated, so the optimal K-values at one episode are close to
the optimal K-values for the next episode, and similarly for the policies. This
suggests the possibility of approximating K-learning in an online manner and
making use of modern deep RL techniques such as representing the K-values using
a deep neural network \cite{mnih-dqn-2015}.  This is in line with the purpose of
this paper, which is not to get the best theoretical regret bound, but instead
to derive an algorithm that is close to something that is practical to implement
in a real RL setting.  Soft-max updates, maximum-entropy RL, and related
algorithms are very popular in deep RL. However, they are not typically well
motivated and they cannot perform deep-exploration. This paper tackles both
those problems since the soft-max update, entropy regularization, and
deep-exploration all fall out naturally from a utility maximization point of
view. The popularity and success of these other approaches, despite their
evident shortcomings in data efficiency, suggest that incorporating changes
derived from K-learning could yield big performance improvements in real-world
settings. We leave exploring
that avenue to future work.

\begin{ack}
I would like to thank Ian Osband, Remi Munos, Vlad Mnih, Pedro Ortega, Sebastien
Bubeck, Csaba Szepesv\'ari, and Yee Whye Teh for valuable discussions and clear
insights. I received no specific funding for this work.
\end{ack}

\bibliographystyle{abbrv}
\bibliography{klearning}
\newpage

\section*{Checklist}

\begin{enumerate}

\item For all authors...
\begin{enumerate}
  \item Do the main claims made in the abstract and introduction accurately reflect the paper's contributions and scope?
    \answerYes{}
  \item Did you describe the limitations of your work?
    \answerYes{}
  \item Did you discuss any potential negative societal impacts of your work?
    \answerNo{This paper is a purely theoretical work with little
societal impact.}
  \item Have you read the ethics review guidelines and ensured that your paper conforms to them?
    \answerYes{}
\end{enumerate}

\item If you are including theoretical results...
\begin{enumerate}
  \item Did you state the full set of assumptions of all theoretical results?
    \answerYes{}
	\item Did you include complete proofs of all theoretical results?
    \answerYes{}
\end{enumerate}

\item If you ran experiments...
\begin{enumerate}
  \item Did you include the code, data, and instructions needed to reproduce the main experimental results (either in the supplemental material or as a URL)?
    \answerNo{We included a description of the data generation process for the simulations we ran.}
  \item Did you specify all the training details (e.g., data splits, hyperparameters, how they were chosen)?
    \answerNA{These experiments involved no training on external data.}
	\item Did you report error bars (e.g., with respect to the random seed after running experiments multiple times)?
    \answerNo{I produced Fig 2 with error bars, but it looked very cluttered and obscured the message without providing any additional insight.}
	\item Did you include the total amount of compute and the type of resources used (e.g., type of GPUs, internal cluster, or cloud provider)?
    \answerYes{Included in appendix.}
\end{enumerate}

\item If you are using existing assets (e.g., code, data, models) or curating/releasing new assets...
\begin{enumerate}
  \item If your work uses existing assets, did you cite the creators?
    \answerNA {}
  \item Did you mention the license of the assets?
    \answerNA{}
  \item Did you include any new assets either in the supplemental material or as a URL?
    \answerNA{}
  \item Did you discuss whether and how consent was obtained from people whose data you're using/curating?
    \answerNA{}
  \item Did you discuss whether the data you are using/curating contains personally identifiable information or offensive content?
    \answerNA{}
\end{enumerate}

\item If you used crowdsourcing or conducted research with human subjects...
\begin{enumerate}
  \item Did you include the full text of instructions given to participants and screenshots, if applicable?
    \answerNA{}
  \item Did you describe any potential participant risks, with links to Institutional Review Board (IRB) approvals, if applicable?
    \answerNA{}
  \item Did you include the estimated hourly wage paid to participants and the total amount spent on participant compensation?
    \answerNA{}
\end{enumerate}

\end{enumerate}
\newpage

\appendix

\section{Appendix}
This appendix is dedicated to proving Theorem \ref{t-kl_mdp}.  First, we
introduce some notation.  The cumulant generating function of random variable
$X:\Omega \rightarrow \reals$ is given by
\[
G^{X}(\beta) = \log \Expect \exp(\beta X).
\]
We shall denote the cumulant generating function of $\mu_l(s,a)$ at time $t$
as $G^{\mu|t}_l(s,a,\cdot)$ and similarly the cumulant generating function
of $Q_l^\star(s,a)$ at time $t$ as $G_l^{Q|t}(s,a, \cdot )$, specifically
\[
G^{\mu|t}_l(s,a,\beta) = \log \Expect^t \exp(\beta \mu_l(s,a)), \quad
G^{Q|t}_l(s,a,\beta) = \log \Expect^t \exp(\beta Q^\star_l(s,a)),
\]
for $l=1,\ldots,L$.

\begin{T1}
\brthmtext
\end{T1}

\begin{proof}
Lemma~\ref{l-bellman-ineq} tells us that $G^{Q|t}(s,a,\beta)$ satisfies a
Bellman inequality for any $\beta \geq 0$. This implies that for fixed $\tau_t \geq 0$
the certainty equivalent values
$\cevs_l = \tau_t G_l^{Q|t}(s,a,1/\tau_t)$ satisfy a Bellman inequality with
optimistic Bellman operator $\Bc^t$ defined in Eq.~\eqref{e-bellmanop},
\ie,
\[
  \cevs_l \leq \Bc_l^t(\tau_t, \cevs_{l+1}),
\]
for $l=1,\ldots,L$, where $\cevs_{L+1} \equiv 0$.
By construction the K-values $K^t$ are the unique fixed point of the
optimistic Bellman operator. That is, $K^t$ has $K^t_{L+1} \equiv 0$ and
\begin{equation}
\label{e-kbellmaneq}
K^t_l = \Bc^t_l(\tau_t, K^t_{l+1})
\end{equation}
for $l=1, \ldots, L$. Since log-sum-exp is nondecreasing it
implies that the operator $\Bc^t_l(\tau, \cdot)$ is
nondecreasing for any $\tau\geq 0$, \ie, if $x \geq y$ pointwise then
$\Bc^t_l(\tau, x) \geq \Bc^t_l(\tau,  y)$ pointwise for each $l$. Now assume
that for some $l$ we have $K^t_{l+1} \geq \cevs_{l+1}$,
then
\[
  K^t_l  = \Bc^t_l(\tau_t,  K^t_{l+1})  \geq \Bc^t_l(\tau_t,
  \cevs_{l+1})  \geq \cevs_l,
\]
and the base case holds since $K^t_{L+1}=
\cevs_{L+1}\equiv 0$.
This fact, combined with Lemma \ref{l-max-ineq} implies that
\begin{equation}
\label{e-max_k}
\Expect^t \max_a Q_l^{\star}(s, a) \leq \tau_t \lse{a \in \Ac} (\cevs_l(s,
a)/\tau_t) \leq \tau_t  \lse{a \in \Ac}(K_l^t(s, a) / \tau_t)
\end{equation}
since log-sum-exp is increasing and $\tau_t \geq 0$.
The following variational identity yields the policy that the agent will follow:
\[
\tau_t  \lse{a \in \Ac}(K_l^t(s, a)/ \tau_t) = \max_{\pi_l(s, \cdot) \in \Delta_A}
\left(\sum_{a \in \Ac} \pi_l(s, a) K_l^t(s, a) + \tau_t H(\pi_l(s, \cdot))\right)
\]
for any state $s$, where $\Delta_A$ is the probability simplex of dimension
$A-1$ and $H$ denotes the
entropy, \ie, $H(\pi(s, \cdot)) = -\sum_{a \in \Ac} \pi(s, a) \log\pi(s,a)$
\cite{cover2012elements}. The maximum is achieved by the policy
\[
\pi^t_l(s, a) \propto \exp (K_l^t(s, a) / \tau_t).
\]
This comes from taking the Legendre transform of negative entropy term
(equivalently, log-sum-exp and negative entropy are convex conjugates
\cite[Example 3.25]{boyd2004convex}).  The fact that \eqref{e-policy} achieves
the maximum is readily verified by substitution.

Now we consider the Bayes regret of an agent following policy
\eqref{e-policy}, starting from \eqref{e-bayesregret1} we have
\begin{align}
\begin{split}
\label{e-reg-ub}
\bayesregret(T)
  &\overset{\scriptscriptstyle (a)}{=}\Expect \sum_{t=1}^{\episodes}
\Expect_{s \sim \rho} \Expect^t(V_1^\star(s) - V_1^{\pi^t}(s))\\
&=\Expect \sum_{t=1}^{\episodes}
\Expect_{s \sim \rho} \Expect^t\Big(\max_{a} Q_1^\star(s, a) - \sum_{a \in \Ac}
  \pi_1^t(s, a) Q_1^{\pi^t}(s, a)\Big)\\
  &\overset{\scriptscriptstyle (b)}{\leq}\Expect \sum_{t=1}^{\episodes}
\Expect_{s \sim \rho} \Big(
\tau_t \lse{a \in \Ac} ( K_1^t(s, a) / \tau_t)
  - \sum_{a \in \Ac} \pi_1^t(s, a) \Expect^tQ_1^{\pi^t}(s, a)\Big) \\
&\overset{\scriptscriptstyle (c)}{\leq} \Expect \sum_{t=1}^{\episodes}
  \Expect_{s \sim \rho} \bigg(
 \sum_{a \in \Ac} \pi_1^t(s, a)\Big( K_1^t(s, a) -
  \Expect^t Q_1^{\pi^t}(s, a)\Big) + \tau_tH(\pi_1^t(s))\bigg)
\end{split}
\end{align}
where (a) follows from the tower property of conditional expectation where the
outer expectation is with respect to $\Fc_1, \Fc_2, \ldots$, (b) is due to
\eqref{e-max_k} and the fact that $\pi^t$ is $\Fc_t$-measurable, and (c) is due
to the fact that the policy the agent is following is the policy
\eqref{e-policy}.  If we denote by
\[
  \Delta_l^t(s) = \sum_{a \in \Ac} \pi_l^t(s, a)\left(K_l^t(s, a) - \Expect^t
Q_l^{\pi^t}(s, a)\right) + \tau_tH(\pi_l^t(s))
\]
then we can write the previous bound simply as
\[
\bayesregret(T)\leq \Expect
\sum_{t=1}^{\episodes}\Expect_{s \sim \rho}\Delta_1^t(s).
\]
We can interpret $\Delta^t_l(s)$ as a bound on the expected regret in that
episode when started at state $s$. Let us denote
\[
\tilde G_l^{\mu | t}(s, a, \beta) =
  G_l^{\mu | t}(s, a, \beta)+ \frac{(L - l)^2 \beta^2}{2 (n_l^t(s, a) \vee 1)}.
\]
Now we shall show that for a fixed $\pi^t$ and $\tau_t \geq 0$ the quantity $\Delta^t$ satisfies the
following Bellman recursion:
\begin{equation}
\label{e-big-delta}
\Delta_l^t(s) = \tau_t H(\pi_l^t(s)) + \sum_{a \in \Ac} \pi_l^t(s, a)\left(\delta_l^t(s, a,
  \tau_t) + \sum_{s^\prime \in \Sc_{l+1}} \Expect^t (P_l(s^\prime \mid s, a)) \Delta_{l+1}^t(s^\prime)  \right)
\end{equation}
for $s \in \Sc$, $l=1, \ldots, L$, and $\Delta^t_{L+1} \equiv 0$, where
\begin{equation}
\label{e-little-delta}
  \delta_l^{t}(s, a, \tau) = \tau \tilde G_l^{\mu | t}(s, a, 1/ \tau) - \Expect^t \mu_l(s, a) \leq \frac{\sigma^2 +
(L - l)^2}{2 \tau (n_l^t(s, a) \vee 1)},
\end{equation}
where the inequality follows from assumption \ref{ass1} which allows us to
bound $G_l^{\mu | t}$ as
\[
  \tau G_l^{\mu | t}(s, a, 1/\tau) \leq
  \Expect^t \mu_l(s, a)  + \frac{\sigma^2}{2 \tau (n_l^t(s, a) \vee 1)}
\]
for all $\tau \geq 0$.
We have that
\begin{align}
\begin{split}
\label{e-app-qvals}
\Expect^t Q_l^{\pi^t}(s, a)
  &\overset{\scriptscriptstyle (a)}{=} \Expect^t\Big(\mu_l(s, a) + \sum_{s^\prime \in \Sc_{l+1}}
P_l(s^\prime \mid s, a) V_{l+1}^{\pi^t}(s^\prime)\Big)\\
  &\overset{\scriptscriptstyle (b)}{=} \Expect^t \mu_l(s, a) + \sum_{s^\prime \in \Sc_{l+1}}
\Expect^t P_l(s^\prime \mid s, a) \Expect^t V_{l+1}^{\pi^t}(s^\prime)\\
  &\overset{\scriptscriptstyle (c)}{=} \Expect^t \mu_l(s, a) + \sum_{s^\prime \in \Sc_{l+1}}
\Expect^t P_l(s^\prime \mid s, a)  \sum_{a^\prime \in \Ac} \pi_{l+1}^t(s^\prime, a^\prime)
\Expect^t Q_{l+1}^{\pi^t}(s^\prime, a^\prime),
\end{split}
\end{align}
where (a) is the Bellman Eq.~\eqref{e-bellman_q}, (b) holds due to
the fact that the transition function and the value function at the next state are
conditionally independent, (c) holds since
$\pi^t$ is $\Fc_t$ measurable.

Now we expand the definition of $\Delta^t$, using the Bellman equation that
the K-values satisfy and Eq.~\eqref{e-app-qvals} for the Q-values
\begin{align*}
  \Delta_l^t(s) &= \tau_t H(\pi_l^t(s)) +
  \sum_{a \in \Ac} \pi_l^t(s, a)\Big(\tau_t \tilde G_l^{\mu | t}(s, a, 1/\tau_t) - \Expect^t \mu_l(s, a) + \\
  &\qquad\sum_{s^\prime \in \Sc_{l+1}} \Expect^t P_l(s^\prime \mid s, a) \big( \tau_t \lse{a^\prime \in \Ac}
    K_{l+1}^t(s^\prime, a^\prime) / \tau_t - \sum_{a^\prime \in \Ac} \pi_{l+1}^t(s^\prime,
a^\prime) \Expect^t Q_{l+1}^{\pi^t}(s^\prime, a^\prime)\big)\Big)\\
  &= \tau_t H(\pi_l^t(s)) +
  \sum_{a \in \Ac} \pi_l^t(s, a)\bigg(\delta_l^t(s, a, \tau_t) +\\
  &\qquad\sum_{s^\prime \in \Sc_{l+1}} \Expect^t P_l(s^\prime \mid s, a)\Big(\tau_t
H(\pi_{l+1}^t(s^\prime)) + \sum_{a^\prime \in \Ac} \pi_{l+1}^t(s^\prime,
a^\prime)\Big(K_{l+1}^t(s^\prime, a^\prime) - \Expect^t Q_{l+1}^{\pi^t}(s^\prime, a^\prime)\Big) \Big) \bigg)\\
 &=\tau_t H(\pi_l^t(s)) +
  \sum_{a \in \Ac} \pi_l^t(s, a)\bigg(\delta_l^t(s, a, \tau_t) +
  \sum_{s^\prime \in \Sc_{l+1}} \Expect^t P_l(s^\prime \mid s, a)\Delta_{l+1}^t(s^\prime) \bigg),
\end{align*}
where we used the variational representation \eqref{e-variational}.
We shall use
this to `unroll' $\Delta^t$ along the MDP, allowing us to write the regret
upper bound using only \emph{local} quantities.

An \emph{occupancy measure} is the probability that the agent finds
itself in state $s$ and takes action $a$.
Let $\lambda_l^t(s, a)$ be the expected occupancy measure for
state $s$ and action $a$ under the policy $\pi^t$ at time $t$, that is
$\lambda_1^t(s, a) =
\pi_1^t(s, a)\rho(s)$, and then it satisfies the forward recursion
\[
  \lambda_{l+1}^t(s^\prime, a^\prime) = \pi_l^t(s^\prime, a^\prime)\sum_{(s, a) \in \Sc_l \times \Ac}
\Expect^t (P_l(s^\prime \mid s, a)) \lambda_l^t(s, a),
\]
for $l=1, \ldots, L$, and note that $\sum_{(s, a) \in \Sc_l \times \Ac}
\lambda_l^t(s, a) = 1$ and so it is a valid probability
distribution over $\Sc_l \times \Ac$ for each $l$. Now let us define the following function
\begin{equation}
\label{e-epregret-def}
 \epregret^t(\tau, \lambda) = \sum_{l=1}^L \sum_{(s, a)\in \Sc_l \times \Ac}\lambda^t_l(s, a)\left(
\tau H\left(\frac{\lambda_l^t(s)}{\sum_b \lambda_l^t(s,b)}\right) +  \delta_l^t(s, a, \tau) \right).
\end{equation}
where $\lambda^t(s)$ is the vector corresponding to the occupancy measure
values at state $s$.
One can see that by unrolling the definition of $\Delta^t$ in \eqref{e-big-delta}
we have that
\begin{align*}
\Expect_{s \sim \rho} \Delta_1^t(s) = \epregret^t(\tau_t, \lambda^t).
\end{align*}
In order to prove the Bayes regret bound, we must bound this $\epregret^t$
function. For the case of $\tau_t$ annealed according to the schedule
of \eqref{e-betat} and the associated expected occupancy measure $\lambda^t$ we do this using lemma \ref{l-delta_unroll}.
For the case of $\tau^\star_t$ the solution to \eqref{e-cvx-opt-prob} and the
associated expected occupancy measure $\lambda^{t\star}$
lemma \ref{l-cvx-opt-regret} proves that
\[
\epregret^t(\tau^\star_t, \lambda^{t\star}) \leq \epregret^t(\tau_t, \lambda^t),
\]
and so it satisfies the same regret bound as the
annealed parameter.  This result concludes the proof.
\end{proof}

\subsection{Proof of Bellman inequality lemma~\ref{l-bellman-ineq}}
\begin{lemma}
\label{l-bellman-ineq}
The cumulant generating function of the posterior for the optimal Q-values satisfies
the following Bellman inequality for all $\beta \geq 0$, $l=1,\ldots,L$:
\[
G_l^{Q | t}(s, a, \beta) \leq \tilde G_l^{\mu | t}(s, a, \beta) +
  \sum_{s^\prime \in \Sc_{l+1}} \Expect^t P_l(s^\prime \mid s, a) \lse{a^\prime \in \Ac}
  G_{l+1}^{Q | t}(s^\prime, a^\prime, \beta).
\]
where
\[
\tilde G_l^{\mu | t}(s, a, \beta) =
  G_l^{\mu | t}(s, a, \beta)+ \frac{(L - l)^2 \beta^2}{2 (n_l^t(s, a) \vee 1)}.
\]
\end{lemma}
\begin{proof}
We begin by applying the definition of the cumulant generating function
\begin{align}
\begin{split}
\label{e-app-partial}
  G_l^{Q | t}(s, a, \beta) &= \log \Expect^t \exp \beta Q_l^\star(s, a)\\
   &= \log \Expect^t \exp \Big(\beta \mu_l(s, a) + \beta \sum_{s^\prime \in \Sc_{l+1}}P_l(s^\prime \mid s, a)
    V_{l+1}^\star(s^\prime)\Big)\\
   &= G_l^{\mu | t}(s, a, \beta) + \log \Expect^t \exp \Big(\beta \sum_{s^\prime \in \Sc_{l+1}}P_l(s^\prime \mid s, a)
    V_{l+1}^\star(s^\prime)\Big)
\end{split}
\end{align}
where $G_l^{\mu|t}$ is the cumulant generating function for $\mu$,
and where the first equality is the Bellman equation for $Q^\star$, and the
second one follows the fact that
$\mu_l(s, a)$ is conditionally independent of downstream quantities.
Now we must deal with the second term in the above expression.

Assumption~\ref{ass1} says that the prior over the transition function
$P_l(~\cdot \mid s, a)$ is Dirichlet, so let us denote the parameter of the
Dirichlet distribution $\alpha_l^0(s, a) \in \reals_+^{S}$ for each
  $(s, a)$, and we make the additional mild assumption that $\sum_{s^\prime \in \Sc_{l+1}}
\alpha_l^0(s, a, s^\prime) \geq 1$, \ie, we start with a total pseudo-count of
at least one for every state-action. Since the likelihood for the transition
function is a Categorical distribution, conjugacy of the categorical and
Dirichlet distributions implies that the posterior over $P_l(~\cdot \mid s, a)$ at
time $t$ is Dirichlet with parameter $\alpha_l^t(s, a)$, where
\[
  \alpha_l^t(s, a, s^\prime) = \alpha_l^0(s, a, s^\prime) + n^t_l(s,
  a, s^\prime)
\]
for each $s^\prime \in \Sc_{l+1}$, where $n_l^t(s, a, s^\prime) \in \nats$
is the number of times the agent has been in state
$s$, taken action $a$, and transitioned to state $s^\prime$ at timestep $l$, and note that
$\sum_{s^\prime \in \Sc_{l+1}} n_l^t(s, a, s^\prime) = n_l^t(s, a)$, the
total visit count to $(s, a)$.

Our analysis will make use of the following definition and associated lemma from
\cite{osband2017gaussian}.  Let $X$ and $Y$ be random variables, we say that $X$
is stochastically optimistic for $Y$, written $X \geq_{SO} Y$, if $\Expect u(X)
\geq \Expect u(Y)$ for any convex increasing function $u$.  Stochastic optimism
is closely related to the more familiar concept of second-order stochastic
dominance, in that $X$ is stochastically optimistic for $Y$ if and only if $-Y$
second-order stochastically dominates $-X$ \cite{hadar1969rules}.  We use this
definition in the next lemma.
\begin{lemma}
\label{l-ian}
Let $Y$ = $\sum_{i=1}^n A_i b_i$ for fixed $b \in \reals^n$ and random
variable $A$, where $A$ is Dirichlet with parameter $\alpha \in \reals^n$, and let $X \sim
\mathcal{N}(\mu_X, \sigma_X^2)$ with $\mu_X \geq \frac{\sum_i \alpha_i
b_i}{\sum_i \alpha_i}$ and $\sigma_X^2 \geq (\sum_i
\alpha_i)^{-1}\mathrm{Span}(b)^2$, where $\mathrm{Span(b)} = \max_i b_i - \min_j
b_j$, then $X \geq_{SO} Y$.
\end{lemma}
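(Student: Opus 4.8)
The plan is to establish stochastic optimism by a two‑hop comparison, $Y \le_{SO} W \le_{SO} X$, through an auxiliary Beta‑distributed variable $W$, after first normalizing the problem. Write $S = \sum_i \alpha_i$ and $m = (\sum_i \alpha_i b_i)/S$. Since $N(\mu,\sigma^2)$ equals $N(\mu_0,\sigma^2)$ shifted by the constant $\mu-\mu_0$, and equals $N(\mu,\sigma_0^2)$ plus an independent mean‑zero Gaussian, raising $\mu_X$ or $\sigma_X^2$ only enlarges $X$ in the $\ge_{SO}$ order, so it suffices to treat the tight case $\mu_X = m$, $\sigma_X^2 = \mathrm{Span}(b)^2/S$. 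If $\mathrm{Span}(b)=0$ then $Y\equiv m=\mu_X$ and the claim is Jensen's inequality ($\Expect u(X)\ge u(\Expect X)=u(m)$ for convex increasing $u$), so I assume $\mathrm{Span}(b)>0$. Because $\ge_{SO}$ is preserved when a common increasing affine map is applied to both variables, and such a map sends $Y=\sum_i b_i A_i$ to $\sum_i b_i' A_i$ with $b'$ the image of $b$, I may rescale so that $b\in[0,1]^n$ with $\min_i b_i=0$ and $\max_i b_i=1$; then $\mathrm{Span}(b)=1$, $m\in(0,1)$, and the goal reduces to showing $N(m,1/S)\ge_{SO}Y$ with $Y=\sum_i b_i A_i\in[0,1]$ and $\Expect Y=m$.

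\emph{Step 1: the Dirichlet sum is dominated by a Beta.} Let $W\sim\mathrm{Beta}(mS,(1-m)S)$; I claim $Y\le_{cx}W$ in the convex order ($\Expect\phi(Y)\le\Expect\phi(W)$ for all convex $\phi$), which in particular gives $Y\le_{SO}W$. First I would build $\tilde A\sim\mathrm{Dir}$ on $2n$ coordinates with parameters $\alpha_i b_i$ (a ``high'' copy) and $\alpha_i(1-b_i)$ (a ``low'' copy) for $i=1,\dots,n$. By the Dirichlet aggregation property, summing each pair recovers $(A_1,\dots,A_n)\sim\mathrm{Dir}(\alpha)$, while summing all $n$ high copies yields $\sum_i\tilde A_{i,\mathrm{hi}}\sim\mathrm{Beta}(\sum_i\alpha_i b_i,\sum_i\alpha_i(1-b_i))=W$. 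Moreover, conditioned on the pair‑sums $A$, the within‑pair proportions are independent Beta variables, so $\Expect[\tilde A_{i,\mathrm{hi}}\mid A]=A_i b_i$ and hence $\Expect[W\mid A]=\sum_i A_i b_i=Y$. Thus $W$ is a mean‑preserving spread of $Y$, and conditional Jensen gives $\Expect\phi(W)\ge\Expect\phi(Y)$ for every convex $\phi$.

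\emph{Step 2: a sufficiently wide Gaussian dominates the Beta.} It then remains to show $W\le_{SO}X$ with $X\sim N(m,1/S)$; since $\Expect W=\Expect X=m$ this is equivalent to the convex order $W\le_{cx}X$, and I would prove it by the density‑crossing criterion (Karlin's variation‑diminishing property): it suffices that $f_X-f_W$ change sign exactly twice on $\reals$, in the order $(+,-,+)$ (negative near the mode, positive in the tails), because then, together with the matched means, $\int_c^\infty(F_X-F_W)\le 0$ for every $c$, i.e.\ $\Expect(W-c)_+\le\Expect(X-c)_+$ for every $c$, which is exactly $W\le_{SO}X$. Outside $[0,1]$ the difference equals $f_X>0$, so the whole issue is the sign of $f_X-f_W$ on $(0,1)$, equivalently of $\psi(x):=\log f_W(x)-\log f_X(x)=c_0+(mS-1)\log x+((1-m)S-1)\log(1-x)+\tfrac{S}{2}(x-m)^2$; clearing denominators, $x(1-x)\psi'(x)$ is a cubic in $x$, so $\psi$ has only boundedly many critical points, and one checks---using that the Gaussian is genuinely wider, $\sigma_X^2=1/S>m(1-m)/(S+1)=\var W$---that the crossings occur with the required multiplicity. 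Finally, chaining $Y\le_{SO}W\le_{SO}X$ by transitivity of $\ge_{SO}$ gives $X\ge_{SO}Y$. The hard part will be Step 2: pinning down the exact two‑crossing pattern of the Gaussian and Beta densities uniformly in $m$ and $S$, with the endpoint cases $mS\le 1$ or $(1-m)S\le 1$ (where $f_W$ is unbounded at $0$ or $1$) requiring separate bookkeeping; alternatively this step is exactly the Gaussian--Dirichlet/Beta dominance result of \cite{osband2017gaussian}, which may simply be invoked.
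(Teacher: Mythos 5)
First, note that the paper itself does not prove this lemma: it is imported wholesale from \citet{osband2017gaussian} (``For the proof see \ldots''), so the only ``paper proof'' to compare against is that citation. Measured against that, your Step 1 is correct and is genuine progress: the reduction to the tight case $\mu_X=m$, $\sigma_X^2=\mathrm{Span}(b)^2/S$, the affine renormalization of $b$, and the splitting of each Dirichlet coordinate into a ``high'' and ``low'' part with parameters $\alpha_i b_i$ and $\alpha_i(1-b_i)$ (aggregation plus independence of the within-group proportions) do give $\Expect[W\mid A]=\sum_i A_i b_i=Y$, hence $Y\le_{cx}W$ by conditional Jensen, with only minor bookkeeping needed when some $b_i\in\{0,1\}$ (those coordinates must not be split, since a Dirichlet parameter of $0$ is not allowed).

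The gap is Step 2, which is the entire substance of the lemma and is not established. The cut criterion you invoke needs $f_X-f_W$ to change sign exactly twice, in the pattern $(+,-,+)$, but your cubic computation only bounds the number of critical points of $\psi$ by three, which still permits up to four sign changes even in the interior-mode regime; and when $mS\le 1$ or $(1-m)S\le 1$ the claimed pattern is simply false, because the Beta density diverges at the corresponding endpoint, so $f_X-f_W<0$ there and four crossings can occur, in which case the single-crossing/cut argument does not apply at all and a different argument is required (the dominance may still hold, but not by this route). The sentence ``one checks \ldots that the crossings occur with the required multiplicity'' is exactly the missing quantitative use of $\sigma_X^2=1/S$, i.e.\ the heart of the lemma. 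Your fallback of invoking \citet{osband2017gaussian} for the Beta-versus-Gaussian step is invoking the $n=2$ case of the very result being proved, at which point Step 1 buys nothing and you have reproduced the paper's citation rather than a proof. So, as a self-contained argument the proposal is incomplete; its real content is the (correct, and nicely executed) Dirichlet-to-Beta reduction, which is in the spirit of the cited reference but does not close the key step.
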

For the proof see \cite{osband2017gaussian}.
In our case, in the notation of the lemma~\ref{l-ian}, $A$ will represent the
transition function probabilities, and $b$ will represent the optimal values of
the next state, \ie, for a given $(s, a) \in \Sc \times \Ac$ let $X_t$ be
a random variable distributed $\mathcal{N}(\mu_{X_t}, \sigma_{X_t}^2)$ where
\[
  \mu_{X_t} = \sum_{s^\prime \in \Sc_{l+1}} \Big(\alpha_l^t(s, a, s^\prime)
V_{l+1}^\star(s^\prime) / \sum_{x} \alpha_l^t(s, a, x) \Big)
=
\sum_{s^\prime \in \Sc_{l+1}} \Expect^t (P_l(s^\prime \mid s, a)) V_{l+1}^\star(s^\prime)
\]
due to the Dirichlet assumption~\ref{ass1}. Due to
assumption~\ref{ass1} we know that $\mathrm{Span}(V_{l}^\star(s)) \leq
L - l$, so we choose $\sigma_{X_t}^2 = (L-l)^2 / (n_l^t(s, a) \vee 1)$.  Let $\Fc^{V}_t =
\Fc_t \cup \sigma(V^\star)$ denote the union of $\Fc_t$ and the sigma-algebra
generated by $V^\star$. Applying
lemma~\ref{l-ian} and the tower property of conditional expectation we have that
for $\beta \geq 0$
\begin{align}
\begin{split}
\label{e-ian-applied}
  \Expect^t \exp\Big( \beta \sum_{s^\prime \in \Sc_{l+1}} P_l(s^\prime \mid s, a) V_{l+1}^\star(s^\prime)\Big)
  &=\Expect_{V_{l+1}^\star}\bigg(\Expect_{P}\Big(\exp \beta \Big(\sum_{s^\prime \in \Sc_{l+1}}
P_l(s^\prime \mid s, a) V_{l+1}^\star(s^\prime)\Big) \big| \Fc^{V}_t\Big)\Big| \Fc_t\bigg)\\
&\leq \Expect_{V_{l+1}^\star}\left(\Expect_{X_t}(\exp \beta X_t | \Fc^{V}_t)\Big| \Fc_t\right)\\
&=\Expect_{V_{l+1}^\star}\left( \exp (\mu_{X_t} \beta + \sigma_{X_t}^2 \beta^2 / 2)\Big| \Fc_t\right)\\
  &=\Expect_{V_{l+1}^\star}^t \exp \Big( \beta \sum_{s^\prime \in \Sc_{l+1}} \Expect^t P_l(s^\prime
\mid s, a) V_{l+1}^\star(s^\prime) + \sigma_{X_t}^2\beta^2/2\Big),
\end{split}
\end{align}
the first equality is the tower property of conditional expectation, the
inequality comes from the fact that $P_l(s^\prime \mid s, a)$ is conditionally
independent of $V_{l+1}^\star(s^\prime)$ and applying lemma~\ref{l-ian}, the next
equality is applying the moment generating function for the Gaussian
distribution and the final equality is substituting in for $\mu_{X_t}$.
Now applying this result to the last term in \eqref{e-app-partial}
\begin{align*}
  &\log \Expect^t\exp\Big(\beta \sum_{s^\prime \in \Sc_{l+1}}P_l(s^\prime \mid s, a)
    V_{l+1}^\star(s^\prime)\Big) \\
    &\qquad\overset{\scriptscriptstyle (a)}{\leq}\log \Expect^t_{V_{l+1}^\star} \exp
  \Big( \beta \sum_{s^\prime \in \Sc_{l+1}} \Expect^t P_l(s^\prime \mid s, a) V_{l+1}^\star(s^\prime) +\sigma_{X_t}^2\beta^2/2 \Big)\\
    &\qquad\overset{\scriptscriptstyle (b)}{=} \log \Expect^t_{Q_{l+1}^\star} \exp \Big(\beta \sum_{s^\prime \in \Sc_{l+1}} \Expect^t P_l(s^\prime \mid s, a)
    \max_{a^\prime}Q_{l+1}^\star(s^\prime, a^\prime)\Big) + \sigma_{X_t}^2\beta^2/2 \\
    &\qquad\overset{\scriptscriptstyle (c)}{\leq} \sum_{s^\prime \in \Sc_{l+1}} \Expect^t P_l(s^\prime \mid s, a)
    \log  \Expect^t_{Q_{l+1}^\star}\exp \Big(\beta
    \max_{a^\prime}Q_{l+1}^\star(s^\prime, a^\prime)\Big) +  \sigma_{X_t}^2\beta^2/2\\
    &\qquad\overset{\scriptscriptstyle (d)}{\leq} \sum_{s^\prime \in \Sc_{l+1}} \Expect^t P_l(s^\prime \mid s, a) \log
    \sum_{a^\prime \in \Ac}\exp G_{l+1}^{Q | t}(s^\prime, a^\prime)(\beta) +\frac{\beta^2
(L-l)^2}{2 (n_l^t(s, a) \vee 1)}
\end{align*}
where (a) follows from Eq.~\eqref{e-ian-applied}) and the fact that log is
increasing, (b) is replacing $V^\star$ with $Q^\star$, (c) uses
Jensen's inequality and the fact that $\log \Expect \exp(\cdot)$ is convex, and (d)
follows by substituting in for $\sigma_{X_t}$ and since the max of a collection
of positive numbers is less than the sum.
Combining this and \eqref{e-app-partial} the inequality immediately follows.
\end{proof}
\subsection{Proof of lemma~\ref{l-delta_unroll}}
\begin{lemma}
\label{l-delta_unroll}
Following the policy induced by expected occupancy measure $\lambda_l^t \in
[0,1]^{|\Sc_l| \times A}$, $l=1, \ldots,L$, and the temperature schedule $\tau_t$ in
\eqref{e-betat} we have
\[
\Expect \sum_{t=1}^N \epregret^t(\tau_t, \lambda^t) \leq 2 \sqrt{(\sigma^2 +
L^2) S A T \log A (1+\log T/L)}.
\]
\end{lemma}
\begin{proof}
Starting from the definition of $\epregret$
\begin{align*}
\epregret^t(\tau_t, \lambda^t) &= \sum_{l=1}^L \sum_{(s, a) \in \Sc_l \times \Ac}\lambda_l^t(s, a)\left( \tau_t H\left( \frac{\lambda_l^t(s)}{\sum_b \lambda_l^t(s, b)} \right) +  \delta^t(s, a, \tau_t) \right)\\
 &\leq \tau_t L \log A + \tau^{-1}_t \sum_{l=1}^L \sum_{(s, a) \in \Sc_l \times \Ac}\lambda^t(s, a)
  \frac{(\sigma^2+L^2)}{2 (n_l^t(s, a) \vee 1)}
\end{align*}
which comes from the sub-Gaussian assumption on $G_l^{\mu|t}$ and the fact that
entropy satisfies $H(\pi(\lambda_s)) \leq \log A$ for all $s$. These two terms summed
up to $\episodes$ determine our regret bound, and we shall bound each one
independently. To bound the first term:
\begin{align*}
  L\log A\sum_{t=1}^\episodes \tau_t
  &\leq (1/2)\sqrt{(\sigma^2+ L^2)L S A \log A(1+\log T/L)} \sum_{t=1}^N 1/\sqrt{t}\\
&\leq \sqrt{(\sigma^2+ L^2) S A T \log A(1+\log T/L)},
\end{align*}
since $\sum_{t=1}^N 1/\sqrt{t} \leq \int_{t=0}^N 1/\sqrt{t} = 2 \sqrt{N}$, and
recall that $N = \ceil{T/L}$. For simplicity we shall take $T = NL$, \ie, we
are measuring regret at episode boundaries; this only changes whether or not
there is a small fractional episode term in the regret bound or not.

To bound the second term we shall use the pigeonhole principle
lemma~\ref{l-pigeonhole}, which requires knowledge of the process that generates
the counts at each timestep, which is access to the \emph{true} occupancy
measure in our case.  The quantity $\lambda^t$ is not the true occupancy measure
at time $t$, which we shall denote by $\nu^t$, since that depends on $P$ which
we don't have access to (we only have a posterior distribution over it). However
it is the \emph{expected} occupancy measure conditioned on $\Fc_t$, \ie,
$\lambda^t = \Expect^t \nu^t$, which is easily seen by starting from $\lambda_1^t(s, a) =
\pi_1^t(s, a) \rho(s) = \nu_1^t(s, a)$, and then inductively using:
\begin{align*}
  \Expect^t \nu_{l+1}^t(s^\prime, a^\prime) &=\Expect^t\Big(\pi_{l+1}^t(s^\prime,
  a^\prime)\sum_{(s, a) \in \Sc_l \times \Ac}  P_l(s^\prime \mid s, a) \nu_l^t(s, a)\Big)\\
  &= \pi_{l+1}^t(s^\prime, a^\prime)\sum_{(s, a) \in \Sc_l \times \Ac} \Expect^t (P_l(s^\prime \mid s, a))
\Expect^t \nu_l^t(s, a)\\
  &= \pi_{l+1}^t(s^\prime, a^\prime)\sum_{(s, a) \in \Sc_l \times \Ac} \Expect^t (P_l(s^\prime \mid
s, a)) \lambda_l^t(s, a)\\
  &= \lambda_{l+1}^t(s^\prime, a^\prime)
\end{align*}
for $l=1,\ldots, L$,
where we used the fact that $\pi^t$ is $\Fc_t$-measurable and the fact that
$\nu_l(s, a)$
is independent of downstream quantities.
Now applying lemma~\ref{l-pigeonhole}
\begin{align*}
\Expect\sum_{t=1}^{\episodes}\sum_{(s, a) \in \Sc_l \times \Ac}\frac{\lambda^t_l(s,
a)}{n_l^t(s, a) +1}
  &= \Expect\sum_{t=1}^{\episodes}\Expect^t\left(\sum_{(s, a) \in \Sc_l \times \Ac}\frac{\lambda^t_l(s,
a)}{n_l^t(s, a)+1}\right)\\
  &= \Expect\left(\sum_{t=1}^{\episodes}\sum_{(s, a) \in \Sc_l \times \Ac}\frac{\nu^t_l(s,
a)}{n_l^t(s, a)+1}\right)\\
  &\leq A |\Sc_l| (1 + \log \episodes),
\end{align*}
which follows from the tower property of conditional expectation and since the
counts at time $t$ are $\Fc_t$-measurable.  From Eq.~\eqref{e-betat} we
know that sequence $\tau^{-1}_t$ is increasing, so we can bound the second term
as
\begin{align*}
  \Expect \sum_{t=1}^\episodes \tau^{-1}_t \sum_{l=1}^L \sum_{(s, a) \in \Sc_l \times \Ac}
  \frac{\lambda_l^t(s, a)(\sigma^2+L^2)}{2 (n_l^t(s, a)+1)}
  &\leq (1/2)(\sigma^2 + L^2) \tau^{-1}_{\episodes} \Expect
  \sum_{l=1}^L\left(\sum_{t=1}^{\episodes}\sum_{(s,
  a) \in \Sc_l \times \Ac}\frac{\lambda_l^t(s, a)}{n_l^t(s, a)+1}\right)\\
  &\leq (1/2)(\sigma^2 + L^2) \tau^{-1}_{\episodes}
  \sum_{l=1}^L A |\Sc_l|  (1 + \log \episodes)\\
  &= (1/2)(\sigma^2 + L^2) \tau^{-1}_{\episodes}
   S A (1 + \log \episodes)\\
  &= \sqrt{(\sigma^2 + L^2) SAT \log A (1+\log T/L)},
\end{align*}
since $\sum_{l=1}^L |\Sc_l| = |\Sc| = S$ and using $\episodes = \ceil{T/L}$. Combining these two bounds we get our result.
\end{proof}

\subsection{Proof of maximal inequality lemma~\ref{l-max-ineq}}
\begin{lemma}
\label{l-max-ineq}
Let $X_i:\Omega \rightarrow \reals$, $i=1,\ldots,n$ be random variables
with cumulant generating functions $G^{X_i}:\reals
\rightarrow \reals$, then for any $\tau \geq 0$
\begin{equation}
\Expect \max_i X_i \leq \tau \log \sum_{i=1}^n \exp G^{X_i}(1/\tau).
\end{equation}
\end{lemma}
\begin{proof}
Using Jensen's inequality
\begin{align}
\begin{split}
\Expect \max_i X_i &= \tau \log \exp (\Expect \max_i X_i /\tau)\\
&\leq \tau\log \Expect \max_i (\exp X_i /\tau) \\
&\leq \tau\log \sum_{i=1}^n \Expect \exp X_i /\tau\\
&= \tau \log \sum_{i=1}^n \exp G^{X_i}(1/\tau),
\end{split}
\end{align}
where the inequality comes from the fact that the max over a collection of
nonnegative values is less than the sum.
\end{proof}

\subsection{Derivation of dual to problem \eqref{e-cvx-opt-prob}}
Here we shall the derive the dual problem to the convex optimization
problem \eqref{e-cvx-opt-prob}, which will be necessary to prove
a regret bound for the case where we choose $\tau^\star_t$ as the temperature
parameter. Recall that the primal problem is
\begin{align*}
\mbox{minimize} & \quad \Expect_{s \sim \rho} (\tau \lse{a \in \Ac} (K_1(s, a) / \tau)) \\
\mbox{subject to} & \quad K_l \geq \Bc^t_l(\tau, K_{l+1}),\quad l = 1, \ldots, L, \\
& \quad K_{L+1} \equiv 0,
\end{align*}
in variables $\tau \geq 0$ and $K \in \reals^{S \times A}$.
We shall repeatedly use the variational representation of log-sum-exp terms as
in Eq.~\eqref{e-variational}. We introduce dual variable $\lambda \geq 0$ for each
of the $L$ Bellman inequality constraints which yields Lagrangian
\[
\sum_{s \in \Sc_1}\rho(s)\sum_{a \in \Ac} (\pi_1(s, a) K_1(s, a) + \tau H(\pi_1(s)))
+\sum_{l=1}^L \lambda^T(\Bc_l^t(\tau, K_{l+1}) - K_l).
\]
For each of the $L$ constraint terms we can expand the $\Bc_l$ operator and
use the variational representation for log-sum-exp to get
\[
\sum_{(s, a) \in \Sc_l \times \Ac} \lambda_l(s, a)\bigg(\tau \tilde G_l^{\mu|t}(s, a, 1/\tau) +
\sum_{s^\prime \in \Sc_{l+1}} \Expect^t P_l(s^\prime \mid s, a) \Big(\sum_{a^\prime \in \Ac}
\pi_{l+1}(s^\prime, a^\prime) K_{l+1}(s^\prime, a^\prime) + \tau
H(\pi_{l+1}(s^\prime))\Big) - K_l(s, a)\bigg).
\]
At this point the Lagrangian can be expressed:
\begin{align*}
\Lc(\tau, K, \lambda, \pi) &= \sum_{s \in \Sc_1}\rho(s)\big(\sum_{a \in \Ac} (\pi_1(s, a) K_1(s,
a)) + \tau H(\pi_1(s))\big) + \sum_{l=1}^L \sum_{(s, a) \in \Sc_l \times \Ac} \lambda_l(s, a)\bigg(\tau \tilde G_l^{\mu|t}(s, a, 1/\tau) +\\
&\qquad +
  \sum_{s^\prime \in \Sc_{l+1}} \Expect^t P_l(s^\prime \mid s, a) \Big(\sum_{a^\prime \in \Ac}
\pi_{l+1}(s^\prime, a^\prime) K_{l+1}(s^\prime, a^\prime) + \tau
H(\pi_{l+1}(s^\prime))\Big) - K_l(s, a)\bigg).
\end{align*}
To obtain the dual we must minimize over $\tau$ and $K$. First, minimizing over
$K_1(s, a)$ yields
\[
\rho(s) \pi_1(s, a) = \lambda_1(s, a)
\]
and note that since $\pi_1(s)$ is a probability distribution it implies
that
\[
\sum_{a_1 \in \Ac}\lambda_1(s, a) = \rho(s)
\]
for each $s \in \Sc_1$. Similarly we minimize over each
$K_{l+1}(s^\prime, a^\prime)$ for $l = 1, \ldots, L$ yielding
\[
\lambda_{l+1}(s^\prime, a^\prime) = \pi_{l+1}(s^\prime, a^\prime)\sum_{(s, a) \in \Sc_l \times \Ac}
\Expect^t P_l(s^\prime \mid s, a)\lambda_l(s, a).
\]
which again implies
\[
\sum_{a^\prime \in \Ac} \lambda_{l+1}(s^\prime, a^\prime) = \sum_{(s, a) \in \Sc_l \times \Ac} \Expect^t
P_l(s^\prime \mid s, a)\lambda_l(s, a).
\]
What remains of the Lagrangian is
\[
\sum_{l=1}^L \sum_{(s, a) \in \Sc_l \times \Ac} \lambda_l(s, a) \left(\tau \tilde G_l^{\mu|t}(s, a,
1/\tau) + \tau H(\pi_l(s, \cdot))\right)
\]
which, using the definition of $\delta$ in Eq.~\eqref{e-little-delta} can be rewritten
\[
\sum_{l=1}^L \sum_{(s, a) \in \Sc_l \times \Ac} \lambda_l(s, a)\Expect^t \mu_l(s, a) +
\min_{\tau \geq 0} \sum_{l=1}^L \sum_{(s, a) \in \Sc_l \times \Ac}\lambda_l^t(s, a)\left( \tau_t
H\left( \frac{\lambda_l(s)}{\sum_b \lambda_l(s,b)}\right) +  \delta_l^t(s, a, \tau_t) \right).
\]
Finally, using the definition of $\epregret$ in \eqref{e-epregret-def} we obtain:
\begin{align}
\begin{split}
\label{e-cvx-dual}
  \mbox{maximize} & \quad
  \sum_{l=1}^L  \sum_{(s, a) \in \Sc_l \times \Ac} \lambda_l(s, a) \Expect^t \mu_l(s, a)
  + \min_{\tau \geq 0} \epregret^t(\tau, \lambda) \\
  \mbox{subject to}
  & \quad \sum_{a^\prime \in \Ac}\lambda_{l+1}(s^\prime, a^\prime) =
  \sum_{(s, a) \in \Sc_l \times \Ac} \Expect^t (P_l(s^\prime \mid s, a)) \lambda_l(s, a),
  \quad s^\prime \in \Sc_{l+1},\ l = 1, \ldots, L \\
  & \quad \sum_{a_1}\lambda_1(s, a) = \rho(s), \quad s \in \Sc_1 \\
  & \quad \lambda \geq 0.
\end{split}
\end{align}

\subsection{Proof of Lemma \ref{l-cvx-opt-regret}}
\begin{lemma}
\label{l-cvx-opt-regret}
Assuming strong duality holds for problem \eqref{e-cvx-opt-prob},
and denote the primal optimum at time $t$ by $(\tau_t^\star, K_l^{t\star})$ then the policy given by
\[
  \pi_l^t(s, a) \propto \exp(K_l^{t\star}(s, a) / \tau_t^\star) %
\]
satisfies the Bayes regret bound given in Theorem \ref{t-kl_mdp}.
\end{lemma}
\begin{proof}
The dual problem to \eqref{e-cvx-opt-prob} is derived above as Eq.~\eqref{e-cvx-dual}.
Denote by $\Lc^t$ the (partial) Lagrangian at time $t$:
\[
\Lc^t(\tau, \lambda) =  \sum_{l=1}^L  \sum_{(s, a) \in \Sc_l \times \Ac} \lambda_l(s, a) \Expect^t \mu_l(s, a)
  + \epregret^t(\tau, \lambda).
\]
Denote by $\lambda^{t\star}$ the dual optimal variables at time $t$.
Note that the value $\Lc^t(\tau_t^\star, \lambda^{t\star}_l)$ provides an upper bound on
$\Expect^t \max_{a} Q^\star_1(s,a)$ due to strong duality. Furthermore
we have that
\[
  \sum_{l=1}^L \sum_{(s, a) \in \Sc_l \times \Ac} \lambda^{t\star}_l(s, a) \Expect^t \mu_l(s,a) =
\Expect_{s \sim \rho}\Expect^t V_1^{\pi^t}(s),
\]
and so using \eqref{e-bayesregret1} we can bound the regret of following the policy induced by $\lambda^{t\star}$ using
\begin{equation}
\label{e-br-opt}
\bayesregret(T) \leq \Expect \sum_{t=1}^N \Big( \Lc^t(\tau^\star_t, \lambda^{t\star}) -
  \sum_{l=1}^L \sum_{(s, a) \in \Sc_l \times \Ac} \lambda^{t\star}_l(s, a) \Expect^t \mu_l(s,a) \Big) =  \Expect \sum_{t=1}^N \epregret^t(\tau^\star_t, \lambda^{t\star}).
\end{equation}
Strong duality implies that the Lagrangian has a saddle-point at $\tau_t^\star, \lambda^{t\star}$
\[
\Lc^t(\tau_t^\star, \lambda) \leq \Lc^t(\tau_t^\star, \lambda^{t\star}) \leq \Lc^t(\tau, \lambda^{t\star})
\]
for all $\tau \geq 0$ and feasible $\lambda$, which immediately implies the
following
\begin{equation}
\label{e-tau-isopt}
\epregret^t(\tau_t^\star, \lambda^{t\star}) = \min_{\tau \geq 0} \epregret^t(\tau, \lambda^{t\star}).
\end{equation}
Now let $\tau_t$ be the temperature schedule in \eqref{e-betat}, we have
\[
\bayesregret(T) \leq \Expect \sum_{t=1}^N \epregret^t(\tau_t^\star,
\lambda^{t\star}) = \Expect \sum_{t=1}^N \min_{\tau \geq 0} \epregret^t(\tau,
\lambda^{t\star}) \leq \Expect \sum_{t=1}^N \epregret^t(\tau_t,
\lambda^{t\star}) \leq \tilde O(L \sqrt{S A T}),
\]
where the last inequality comes from applying lemma~\ref{l-delta_unroll}, which
holds for any occupancy measure when the agent is following the corresponding
policy.
\end{proof}

\subsection{Proof of pigeonhole principle lemma \ref{l-pigeonhole}}
\begin{lemma}
\label{l-pigeonhole}
Consider a process that at each time $t$ selects a single index $a_t$
from $\{1, \ldots, m\}$ with probability $p^t_{a_t}$. Let $n^t_{i}$ denote the
count of the number of times index $i$ has been selected up to time $t$. Then
\[
  \sum_{t=1}^N \sum_{i=1}^{m} p^t_{i} / (n^t_{i} \vee 1) \leq m (1 + \log N).
\]
\end{lemma}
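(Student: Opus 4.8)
The plan is to bound the expected sum by conditioning on the history at each step, and then reduce to a deterministic inequality about harmonic-type sums. First I would take the expectation over the random selections and use the tower property: at time $t$ the count vector $n^t$ is determined by the history before $t$, so I would write $\sum_i p^t_i/(n^t_i+1)$ and observe that, conditioned on the history, selecting index $i$ increments $n^t_i$ by one. The key observation is that $p^t_i/(n^t_i + 1)$ is the probability of selecting $i$ divided by one more than its current count, and summing the \emph{realized} contribution $1/(n^t_{a_t}+1)$ over $t$ telescopes: each index $i$ contributes $\sum_{k=0}^{n^N_i - 1} 1/(k+1) = 1 + 1/2 + \cdots + 1/n^N_i \leq 1 + \log n^N_i$.

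So the main step is to relate $\sum_{t=1}^N \sum_i p^t_i/(n^t_i+1)$ to $\sum_{t=1}^N 1/(n^t_{a_t}+1)$. I would argue that in expectation these are equal: $\Expect[1/(n^t_{a_t}+1) \mid \mathcal{F}_{t-1}] = \sum_i p^t_i/(n^t_i+1)$ since $n^t_i$ is $\mathcal{F}_{t-1}$-measurable. Hence $\Expect \sum_{t=1}^N \sum_i p^t_i/(n^t_i+1) = \Expect \sum_{t=1}^N 1/(n^t_{a_t}+1) = \Expect \sum_{i=1}^m \sum_{k=1}^{n^N_i} 1/k \leq \Expect \sum_{i=1}^m (1 + \log n^N_i)$. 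Then since $n^N_i \leq N$ for every $i$ and there are $m$ indices, this is at most $m(1 + \log N)$.

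Wait — the statement of the lemma as written has no expectation on the left-hand side, so I would instead argue the deterministic bound directly: regardless of the realized sequence $a_1, \ldots, a_N$ and the probabilities used, one has the pointwise inequality $\sum_i p^t_i/(n^t_i+1) \leq \max_i 1/(n^t_i+1) \cdot \sum_i p^t_i$ — no, that's too lossy. The cleaner route is to note $p^t_i \le 1$ and bound $\sum_{t=1}^N \sum_i p^t_i/(n^t_i+1) \le \sum_{t=1}^N \sum_i \mathbf{1}[\cdot]/(n^t_i+1)$ is not available either. I expect the intended reading is that the bound holds in expectation (or for the natural martingale argument above), and the main obstacle is precisely getting the accounting right: matching the $p^t_i$-weighted sum to the telescoping realized sum via the conditional-expectation identity, and then controlling $\sum_i (1+\log n^N_i)$ by $m(1+\log N)$ using $\sum_i n^N_i = N$ together with concavity of $\log$ (Jensen gives $\sum_i \log n^N_i \le m \log(N/m) \le m\log N$, which is even a bit stronger). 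The rest is routine.
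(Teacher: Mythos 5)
Your proposal is correct and follows essentially the same route as the paper: condition on the history so that $\sum_i p^t_i/(n^t_i+1)$ is the conditional expectation of $1/(n^t_{a_t}+1)$, pass the expectation outside by the tower property, regroup the realized terms per index into a harmonic sum of length at most $N$, and bound each by $1+\log N$. Your observation about the missing expectation is apt but not a deviation — the paper's own first equality implicitly performs exactly this in-expectation reading, and the lemma is applied under an outer expectation anyway.
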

\begin{proof}
This follows from a straightforward application of the pigeonhole principle,
\begin{align*}
  \sum_{t=1}^N \sum_{i=1}^{m} p_i^t / (n_i^t \vee 1)
  &= \sum_{t=1}^N \Expect_{a_t \sim p^t} (n^t_{a_t} \vee 1)^{-1}  \\
&= \Expect_{a_0 \sim p^0, \ldots,
  a_N \sim p^t} \sum_{t=1}^N (n^t_{a_t} \vee 1)^{-1} \\
&= \Expect_{a_0 \sim p^0, \ldots,
a_N \sim p^t} \sum_{i=1}^{m} \sum_{t=1}^{n_i^N \vee 1} 1/t \\
&\leq \sum_{i=1}^{m} \sum_{t=1}^{N} 1/t  \\
  &\leq m (1 + \log N),
\end{align*}
  where the last inequality follows since $\sum_{t=1}^{N} 1/t \leq 1 +
  \int_{t=1}^{N} 1/t = 1 + \log N$.
\end{proof}

\section{Compute requirements}
All experiments were run on a single 2017 MacBook Pro.

\end{document}